\title{Minimal Macro-Based Rewritings of Formal Languages: \\Theory and Applications in Ontology Engineering (and beyond)}
\author {
    Christian Kindermann\textsuperscript{\rm 1},
    Anne-Marie George\textsuperscript{\rm 2},
    Bijan Parsia\textsuperscript{\rm 3},
    Uli Sattler\textsuperscript{\rm 3}
}
\newtheorem{theorem}{Theorem}[section]
\newtheorem{definition}{Definition}[section]
\newtheorem{lemma}{Lemma}[section]
\newtheorem{example}{Example}
\newcounter{probCounter}
\newtheorem{problem}[probCounter]{Problem}
\newcommand{\Lang}{\mathcal{L}}
\newcommand{\conc}[1]{\ensuremath{\mathsf{#1}}}
\newcommand{\opr}[1]{\ensuremath{\operatorname{\mathit{#1}}}}
\newcommand{\isa}{\sqsubseteq}
\newcommand{\mytodo}[2]{\todo[size=\tiny, color=#1!50!white]{#2}}
\newcommand{\myrevtodo}[2]{{%
		\let\marginpar\marginnote
		\reversemarginpar
		\renewcommand{\baselinestretch}{0.8}%
		\todo[size=\tiny, color=#1!50!white]{#2}\xspace}}
\newcommand{\myinlinetodo}[2]{\todo[size=\small, color=#1!50!white, inline, caption={}]{#2}\xspace}
\newcommand{\registerAuthor}[3]{%
	\expandafter\newcommand\csname #2com\endcsname[1]{\mytodo{#3}{\textsc{#2}: 
	##1}}%
	\expandafter\newcommand\csname 
	#2revcom\endcsname[1]{\myrevtodo{#3}{\textsc{#2}: ##1}}%
	\expandafter\newcommand\csname 
	#2inline\endcsname[1]{\myinlinetodo{#3}{\textsc{#2}: ##1}}%
	\expandafter\newcommand\csname 
	#2inlineLater\endcsname[1]{\lv{\myinlinetodo{#3}{\textsc{#2}: ##1}}}%
}
\newcommand{\size}{\mathit{size}}
\newcommand{\occ}[1]{\mathit{occ}_{#1}}
\newcommand{\argmin}{\mathit{arg min}}
\newcommand{\macrosystem}{\mathfrak{M}}
\newcommand{\terms}{T(\Sigma)}
\newcommand{\LL}{\mathcal{L}}
\newcommand{\mterms}[1]{T(\Sigma \cup #1)}
\newcommand{\encode}{\mathfrak{M}} 
\newcommand{\fencode}[2]{(#1_{#2}, \mathcal{#2})} 
\newcommand{\MM}{\mathcal{M}} 
\newcommand{\M}{\ensuremath{\mathcal{M}}\xspace}
\newcommand{\macrodefinitions}{\mathcal{M}} 
\newcommand{\MMstar}{\MM^*} 
\newcommand{\fixedpoint}[1]{\MM^*(#1)} 
\newcommand{\minencode}[2]{\encode_{\textit{min}}^{#1,#2}} 
\newcommand{\ont}{\ensuremath{\mathcal{O}}\xspace}
\newcommand{\appsymb}{$\bigstar$}
\newcommand{\appref}[1]{{\appsymb}}
\newcommand{\toappendixx}[1]{}
\begin{document}

\maketitle

\begin{abstract}
In this paper, we introduce the problem of rewriting finite formal languages using syntactic macros such that the rewriting is minimal in size. We present polynomial-time algorithms to solve variants of this problem and show their correctness. To demonstrate the practical relevance of the proposed problems and the feasibility and effectiveness of our algorithms in practice, we apply these to biomedical ontologies authored in OWL. We find that such rewritings can significantly reduce the size of ontologies by capturing repeated expressions with macros.
In addition to offering valuable assistance in enhancing ontology quality and comprehension, the presented approach introduces a systematic way of analysing and evaluating features of rewriting systems (including syntactic macros, templates, or other forms of rewriting rules) in terms of their influence on computational problems.
\end{abstract}

\section{Introduction}

KISS and DRY, standing for ``Keep It Simple, Stupid'' and ``Don't Repeat Yourself'',
are key principles in software development.
These principles prove advantageous in other contexts involving formal languages, including knowledge bases, formal specifications, and logical theories.

Despite the value of simplification and reduction of repetition, there is no universally applicable mechanism assisting with the associated refactoring of formal languages. Challenges arise from the need to preserve application-specific properties and the difficulty of specifying desirable and undesirable refactorings.
In other words, there is a fine line between succinct simplicity and obscure terseness.

This paper proposes a formal framework for refactoring formal languages through automated conversions into smaller rewritings. The focus is on identifying rewriting mechanisms applicable in diverse situations and analysing their properties.
  One simple but effective refactoring strategy is the introduction of \textit{names} for more complex recurring structures. Preserving syntactic structure is often important, motivating our exploration of rewriting mechanisms based on \textit{syntactic macros}.

Consider the following example language consisting of formulas in propositional logic $\Lang = \{a \land b, (a \land b) \lor c,~((a \land b) \lor c) \land d \}$. A macro $m \mapsto a \land b$ can be used to ``macrofy'' $\Lang$ into 
$\Lang_m = \{m, m \lor c,~(m \lor c) \land d \}$. Note that the expression $m \lor c$ occurs twice in the macrofied language $\Lang_m$. So, the question arises whether a rewriting mechanism allows for \textit{nested} macro definitions, e.g., $m' \mapsto m \lor c$, or not.

Deciding what features to include in a rewriting mechanisms, e.g., allowing for nesting or not, is not straightforward.
To evaluate a feature's impact, we propose to analyse its influence on computational problems \textit{formulated w.r.t.\ rewriting mechanisms}. For instance, consider the above example and the problem of computing size-minimal rewritings.
If the complexity of computing minimal rewriting with un-nested macros is lower than that of minimal rewriting with nested macros, then this effect of the feature ``nesting'' can inform its inclusion for practical purposes.

In this paper, we investigate a rewriting mechanism based on syntactic macros allowing for nesting.
We show conditions under which the problem of finding size-minimal rewritings can be solved in polynomial time. We implement this rewriting mechanism and apply it to find size-minimal rewritings of large biomedical ontologies of practical relevance. The attained size reductions are comparable to, if not better than those of existing approaches, while also showcasing superior performance in terms of processing time.

\section{Preliminaries}\label{sec:preliminaries}

We define the technical terms used in the following sections. In particular, we introduce terms over a mixture of ranked, unranked, and mixed symbols, which will then enable us to apply our framework to the Web Ontology Language (OWL) which uses all three kinds of symbols.

\newcommand{\ar}{\conc{Ar}}

\begin{definition}[Alphabet, Terms, Language]\label{def:alphabetTermsLanguage}
        A \emph{ranked alphabet} $\mathcal{F}$ is a finite set of \emph{symbols} together with a function 
         $\ar: \mathcal{F}\rightarrow \mathbb{N}$ that associates symbols with their \emph{arity}. Symbols $f\in \mathcal{F}$ with $\ar(f)=0$ are called \emph{constants} and symbols with $\ar(f)=1$ are called \emph{unary}.
        An \emph{unranked alphabet} $\mathcal{U}$ is a set of symbols. 
        A \emph{mixed alphabet} $\mathcal{X}\subseteq  \mathcal{F}$ is a subset of the ranked alphabet. 
        An \emph{alphabet} $\Sigma = \mathcal{F} \cup \mathcal{U}$ is a set of ranked or unranked symbols.
        
        The set of \emph{terms} $T(\Sigma)$ over an alphabet $\Sigma = \mathcal{F} \cup \mathcal{U}$ is  inductively defined as follows: 
        \begin{itemize}
        \item $f\in T(\Sigma)$ if $f \in\mathcal{F}$ and  $\ar(f) = 0$, 
        \item  $f(t_1, \ldots, t_n) \in T(\Sigma)$ if $f\in\mathcal{F}$, $n\geq 1$, and  $\ar(f) = n$,    
        \item  $f(t_1, \ldots, t_n) \in T(\Sigma)$ if $f\in\mathcal{U}$ and $n\geq 1$, and  
        \item  $f(t_1, \ldots, t_n) \in T(\Sigma)$ if $f\in\mathcal{X}$, $n\geq 1$, and  $\ar(f) \leq n$,
        \end{itemize}
        where $t_1, \dots, t_n \in T(\Sigma)$.
        A \emph{language $\Lang$ over }$\Sigma$ is a set of terms $\Lang\subseteq T(\Sigma)$.
\end{definition}

To deal with unranked and mixed symbols, in particular to define what it means for two terms to be identical, we introduce the notion of a term tree where edge labels are used to treat terms uniformly as unordered trees. 

\newcommand{\ttree}{\mathsf{Tr}}
\begin{definition}[Term Tree]\label{def:termTree}
            Let $\Sigma = \mathcal{F}\cup \mathcal{U}$ be an alphabet  of disjoint sets of ordered  ($\mathcal{F}$), mixed  ($\mathcal{X}\subseteq \mathcal{F}$), and unordered   ($\mathcal{U}$) symbols. 
            The \emph{term tree} $\ttree(t)$ of a term $t \in T(\Sigma)$ is an unordered tree  inductively defined as follows: if 
            \begin{itemize}
                \item  $t$ is a constant, then $\ttree(t)$ is a node labelled with $t$,
                \item  $t = f(t_1, \ldots, t_n)$ and $f \in \mathcal{F}$ with $\ar(f) = n$, then $\ttree(t) = $
                \begin{adjustbox}{width=0.185\textwidth,valign=t}
                \begin{forest}
                        for tree={
                          l sep=15pt,
                          parent anchor=south,
                          align=center
                        }
                        [$f$
                            [$\ttree(t_1)$,edge label={node[midway,left]{$1$}}] 
                             [$\ldots$, no edge ]
                            [$\ttree(t_n)$,edge label={node[midway,right]{$n$}}]
                        ]
                        \end{forest}            
                \end{adjustbox}
                \item  $t = f(t_1, \ldots, t_n)$ and $f \in \mathcal{U}$,
                then $\ttree(t) =$ \begin{adjustbox}{width=0.185\textwidth,valign=t}
                \begin{forest}
                        for tree={
                          l sep=15pt,
                          parent anchor=south,
                          align=center
                        }
                        [$f$
                            [$\ttree(t_1)$,edge label={node[midway,left]{$*$}}] 
                             [$\ldots$, no edge ]
                            [$\ttree(t_n)$,edge label={node[midway,right]{$*$}}]
                        ]
                        \end{forest}            
                \end{adjustbox}
                  \item  $t = f(t_1, \ldots, t_m, \ldots, t_n)$ and $f \in \mathcal{X}$ with $\ar(f) = m$,
                then $\ttree(t) = $ \begin{adjustbox}{width=0.3\textwidth,valign=t}
                \begin{forest}
                        for tree={
                          l sep=15pt,
                          parent anchor=south,
                          align=center
                        }
                        [$f$
                            [$\ttree(t_1)$,edge label={node[midway,left]{$1~~~$}}]
                             [$\ldots$, no edge ]
                             [$\ttree(t_m)$,edge label={node[midway,left]{$m$}}] 
                             [$\ldots$, no edge ]
                            [$\ttree(t_n)$,edge label={node[midway,right]{$~~*$}}]
                        ]
                        \end{forest}            
                \end{adjustbox}
            \end{itemize}
        \end{definition}

                The notion of \textit{term equality} then corresponds to the notion of graph isomorphism between term trees that preserves both node and edge labels.
        Similarly, the notion of a \textit{subterm} relation is based on graph isomorphisms between restrictions of their corresponding term trees.
                    
        \begin{definition}[Subterms / Subtrees]\label{def:subterm}
Let $t, t'$ be two edge-and node-labelled, unordered trees. We say that $t$ is a \emph{subtree of }$t'$ if there exists a node $n$ in $t'$ and a (node- and edge-label preserving) isomorphism from the subtree of $t'$ rooted at $n$ to $t$. In this case, we refer to $n$ as a \emph{position} of $t$ in $t'$ and use $t'|_n$ for the subtree of $t'$ rooted in $n$.

Let $t,t' \in T(\Sigma)$ be two terms. We say that $t$ is a \emph{subterm of}  $t'$, denoted $t \preceq t'$, if $T(t)$ is a subtree of $T(t')$. We say that 
$t$ and $t'$ \emph{are equal}, denoted $t = t'$, if they are both subterms of each other. 
\end{definition}
        
        In the following, we rarely need to distinguish between terms and their corresponding trees and use \textit{tree} and \textit{term} interchangeably;  we will not mention whether symbols are ordered, unordered, or mixed unless the context requires this.

        \begin{definition}[Substitution]\label{def:substitution}
        The \emph{substitution} of a term $u \in T(\Sigma)$ for a term $t' \preceq t$ at position $p$, written $t[u]_p$, is the replacement of $t'$ in $t$ at position $p$ with $u$.
\end{definition} 

     \subsection{Web Ontology Language}\label{sec:owl}

We will evaluate the feasibility and effectiveness of the algorithms introduced later on OWL ontologies~\cite{DBLP:journals/ws/GrauHMPPS08, Motik2008OWL2W}. If readers are not familiar with OWL, it suffices to say that OWL has all three kinds of symbols we consider here, i.e., ordered, unordered, and mixed symbols.  
For those familiar with OWL, we use some Description Logic syntax for the sake of readability but interpret logical constructors as specified by OWL.
Figure~\ref{fig:OWLexample} shows two example term trees for the OWL expressions
$\opr{SubClassOf}(\conc{A},\opr{ObjectIntersectionOf}(\conc{B},\conc{C}))$
and
$\opr{DisjointUnion}(\conc{A},\conc{B},\conc{C},\conc{D})$.
Note that \opr{SubClassOf}, \opr{ObjectIntersectionOf}, and \opr{DisjointUnion} are, respectively, 
ordered, unordered, and mixed symbols  and that we have used descriptive names rather than numbers to indicate the order of terms.

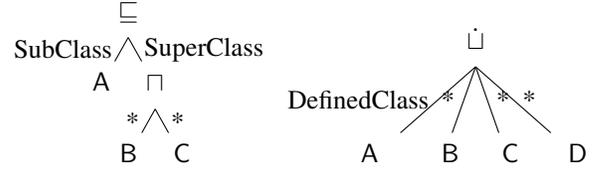
\begin{figure}[t]
  \centering

      \begin{forest}
        for tree={
          l sep=9pt,
          parent anchor=south,
          align=center
        }
        [$\isa$
    [$\conc{A}$, edge label={node[midway,left]{SubClass}}]
        [$\sqcap$, edge label={node[midway,right]{SuperClass}}
        [$\conc{B}$, edge label={node[midway,left]{*}} ] 
        [$\conc{C}$, edge label={node[midway,right]{*}}  ]
        ]
        ] 
      \end{forest}
      \begin{forest}
        for tree={
          l sep=25pt,
          parent anchor=south,
          align=center
        }
        [$\dot\sqcup$
        [~$\conc{A}~~$, edge label={node[midway,left]{DefinedClass}}]
        [~$\conc{~B}$, edge label={node[midway,left]{*}} ] 
        [~$\conc{C}$, edge label={node[midway,right]{*}} ]
        [~$\conc{~D}~~$, edge label={node[midway,right]{*}}]
        ] 
      \end{forest}

  \caption{Example of term trees for OWL expressions.}
  \label{fig:OWLexample}
\end{figure}
\section{Macros and Encodings}

\subsection{Macros for Formal Languages}\label{sec:basicNotions}

A \textit{macro} is understood as a rule that maps an input (in our case finite sets of terms, possibly using macros symbols) to an output  (also terms).
Such a mapping is usually used to expand a smaller input into a larger output.

\begin{definition}[Macro Definition]\label{def:macro}
    Let $M$   be a finite set of constant \emph{macro symbols} disjoint from $\Sigma$.  
        A \emph{set of macro definitions} is a total function $$\M \colon \begin{array}[t]{rcl} M &\rightarrow &T(\Sigma \cup M)\setminus (\Sigma \cup M) \\
        m &\mapsto &t
        \end{array}$$
        If $\M(m) = t$, we call $m \mapsto t$ a \emph{macro definition}
        and $m$  \emph{a macro for} $t$.
\end{definition}

The range of macro definitions is a set of non-constant terms. 
In the following, we omit the explicit exclusion of non-constant terms and simply write $\M \colon M \rightarrow T(\Sigma \cup M)$ to specify macro definitions for improved readability.

A macro's occurrence can be \emph{expanded} to its associated output structure, i.e., by replacing the macro with its expansion. Such an expansion may lead to the occurrence of other macros, giving rise to an iterative expansion process. The limit of this expansion process will be referred to as  \emph{the expansion} of a macro. 

\begin{definition}[Macro Expansion]\label{def:macroExpansion}
    Let $\mathcal{M} \colon M \rightarrow T(\Sigma \cup M)$ be a set of macro definitions and $t \in T(\Sigma \cup M)$ a term. 
      The $1$-\emph{step expansion} $\M(t)$ of $t$ w.r.t.\ $\mathcal{M}$ is the term $t'$ obtained by replacing all occurrences of any macro $m\in M$ in $t$ with its expansion $\M(m)$. 
    The $n$-\emph{step expansion} of $t$ w.r.t.\ $\mathcal{M}$ is $\mathcal{M}^n(t) = \underbrace{\mathcal{M}(\ldots\mathcal{M}}_{n \text{ times}}(t))$.
    The \emph{expansion} of a macro, written $\mathcal{M}^*(t)$, is the least fixed-point of $\mathcal{M}$ applied to $t$.
\end{definition}

The fixed-point $\fixedpoint{m}$ of a macro $m$ does not necessarily exist.
Consider the macro definition $m \mapsto succ(m)$. While the $n$-step expansion of $m$ is well-defined for any $n \in \mathbb{N}$, 
the expansion $\fixedpoint{m}$  does not exist.

The inverse of macro expansion is macro \textit{instantiation}, i.e., replacing a term with a macro symbol that can be expanded to the term.

\begin{definition}[Macro Instantiation]\label{def:macroInstantiation}
Let $\Lang \subseteq T(\Sigma \cup M)$ be a language and $\M \colon M \rightarrow T(\Sigma \cup M)$ a set of macro definitions. A macro $m \in M$ \emph{can be instantiated in }$\Lang$ if there exists an $n$ s.t. $\M^n(m)$ occurs in $\Lang$, i.e., there is a $t\in \Lang$ and a position $p$ with $t|_p = \M^n(m)$. In this case, we also say  
\emph{$m$ can be instantiated in $t$}. We refer to the act of replacing $t|_p$ with $m$ as \emph{instantiating} $m$ in $t$ at position $p$. The term $m$ occurring at position $p$ in $t[m]_p$ is an \emph{instantiation} of $m$.
\end{definition}

Since we are interested in macros for terms,
we will restrict macro definitions so that the expansion of a macro is required to exist, for which we introduce some terminology. 

\begin{definition}[Acyclic Macro Definitions]\label{def:acyclic}
    For macros $m, m' \in M$, we say that $m$ \emph{directly uses} $m'$ if $m'$ occurs in $\M(m)$. Next, let \emph{uses} be the transitive closure of \emph{directly uses}. A set of macro definitions $\M\colon M \mapsto T(\Sigma \cup M)$  is \emph{acyclic} if there are no `uses' cycles in $M$, i.e., if there  is no macro $m\in M$  that uses itself. 
\end{definition}

\begin{lemma}\label{lemma:cycles}
    Let $\M\colon M \mapsto T(\Sigma \cup M)$ be a set of macro definitions. Then: 
    \begin{enumerate}
        \item[] \hspace*{-0.4cm}(a) \M is acyclic iff $\fixedpoint{t}$ exists for each $t\in  T(\Sigma \cup M)$.
        \item[] \hspace*{-0.4cm}(b) If $\fixedpoint{t}$ exists, it is unique.
        \item[] \hspace*{-0.4cm}(c) If $\fixedpoint{t}$ exists then $\fixedpoint{t}\in T(\Sigma)$.
    \end{enumerate}
\end{lemma}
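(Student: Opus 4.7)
The plan is to prove the three parts in the order (b), (c), (a), since the argument for (a) will invoke a small observation established in the proof of (c). The shared key fact is that the 1-step expansion $\M(\cdot)$ is deterministic -- given $t$, simultaneously replacing every occurrence of every $m\in M$ in $t$ by $\M(m)$ produces a uniquely determined term -- and that by Definition~\ref{def:macro} the image $\M(m)$ lies in $T(\Sigma\cup M)\setminus(\Sigma\cup M)$, so its term tree has at least two nodes and in particular $\M(m)\neq m$.

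Part (b) is then immediate: the sequence $(\M^n(t))_{n\geq 0}$ is unique, so if it stabilises, its limit is unique. For (c), suppose $t^*:=\M^*(t)$ exists, so $\M(t^*) = t^*$. If some $m \in M$ occurred in $t^*$ at some position $p$, then $\M(t^*)|_p = \M(m) \neq m = t^*|_p$, contradicting $\M(t^*)=t^*$. Hence $t^*\in T(\Sigma)$. A byproduct to reuse below: whenever any macro symbol occurs in a term $t'$, we have $\M(t')\neq t'$.

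For (a) $(\Rightarrow)$, since $M$ is finite and \emph{uses} is a strict partial order under acyclicity, each macro has a well-defined rank $d(m) := 1 + \max\{d(m') : m' \text{ occurs in } \M(m)\}$ with $\max\emptyset = -1$; extend to terms by $D(t) := \max\{d(m) : m \text{ occurs in } t\}$. Any macro occurring in $\M(t)$ must have been introduced by expanding some $m$ already in $t$ and therefore has rank $< d(m) \leq D(t)$, so $D(\M(t)) < D(t)$ whenever $D(t)\geq 0$. Hence within at most $D(t)+1$ steps the sequence reaches a term of rank $-1$, i.e., in $T(\Sigma)$, which is trivially a fixed point. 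For $(\Leftarrow)$, I argue by contraposition: if some $m$ uses itself via a chain $m=m_0, m_1, \ldots, m_k=m$ with $m_{i+1}$ occurring in $\M(m_i)$, a short induction on $i$ yields that $m_i$ occurs in $\M^i(m)$, whence $m$ occurs in $\M^{kn}(m)$ for every $n$. By the byproduct from (c), consecutive terms in this subsequence differ, so $(\M^n(m))$ never stabilises and $\M^*(m)$ does not exist.

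I expect the main obstacle to be the forward direction of (a): one has to articulate precisely that the \emph{simultaneous} 1-step expansion strictly lowers the macro-rank monovariant $D(\cdot)$, using that every macro in $\M(t)$ must come from the expansion $\M(m)$ of some $m$ already in $t$ (and hence has rank below $d(m)$). Once this monovariant is in place, termination and the remaining parts are short bookkeeping around the single observation that a macro symbol is a single node whereas its defining right-hand side has at least two.
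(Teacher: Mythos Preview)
Your proof is correct. The paper does not give a detailed argument: for (b) it merely notes that $\M$ is a function, for (c) that $\M^*(t)$ is a fixed point and $\M$ is total on $M$, and for (a) it appeals in one sentence to ``the close relationship between macro systems and finite, ground term rewriting systems.'' Your treatment of (b) and (c) spells out exactly these points, so there the two agree. For (a), however, you take a genuinely more elementary route: instead of invoking termination results for ground TRSs, you introduce the rank monovariant $D(\cdot)$ and show it strictly decreases under the simultaneous $1$-step expansion, which gives termination directly and keeps the proof self-contained. The contrapositive direction is likewise made explicit rather than left implicit in the TRS reference. One small wording issue in $(\Leftarrow)$: ``consecutive terms in this subsequence differ'' is slightly imprecise---what your argument actually yields is that a macro symbol occurs in $\M^{kn}(m)$ for every $n$, hence $\M^{kn+1}(m)\neq \M^{kn}(m)$ in the \emph{full} sequence, and that already rules out eventual stabilisation. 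The content is sound; only that phrase could be tightened.
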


So, acyclicity guarantees the existence of least fixed points for the expansion of macros. Lemma~\ref{lemma:cycles}(a) is an immediate consequence of the close relationship between macro systems and finite, ground term rewriting systems \cite{DBLP:books/daglib/0092409}. Lemma~\ref{lemma:cycles}(b) is due to $\M$ being a function, and  Lemma~\ref{lemma:cycles}(c) is due to $\fixedpoint{t}$ being a fixed-point and \M being total on $M$. 
In what follows, we restrict our attention to acyclic sets of macro definitions $\M\colon M \mapsto T(\Sigma \cup M)$. 

Augmenting a language $\Lang \subseteq T(\Sigma)$ with a set of macro symbols $M$
can be seen to define a new language $\Lang_M \subseteq T(\Sigma \cup M)$.
If the expansion of all macros in $\Lang_M$ w.r.t.\ some macro definitions $\mathcal{M} \colon M \rightarrow T(\Sigma \cup M)$ yields $\Lang$, then $\Lang_M$ together with $\mathcal{M}$ can be seen as an \textit{encoding} of $\Lang$.

\begin{definition}[Macro System, Macrofication, Language Encoding]\label{macroSystem}
    A \emph{macro system} $\macrosystem$ is a tuple $(\Lang_M, \macrodefinitions)$
    where $\Lang_M \subseteq T(\Sigma \cup M)$ is a language
    and $\macrodefinitions \colon M \rightarrow T(\Sigma \cup M)$  a set of macro definitions.
    If $\fixedpoint{\Lang_M} := \bigcup\limits_{t \in \Lang_M} \fixedpoint{t} = \Lang$,
    then $\macrosystem$ is an \emph{encoding} of  $\Lang \subseteq T(\Sigma)$. 
    In this case, we will refer to $\Lang_M$ as a \emph{macrofication} of $\Lang$ w.r.t.\ $\mathcal{M}$.
\end{definition}

\begin{example}\label{ex: 1}
    Consider the language $\LL \subseteq \terms$ over symbols $\Sigma = \{a,b,c,d,e,f\}$ which is given by the set of terms
    \[\LL = \{a(b(c(e),d(f))), 
    b(c(e),d(f)), a(d(f))\}.\]
    Furthermore, consider the set of macro definitions 
    \[\MM = \{m\mapsto c(e), m' \mapsto d(f), m'' \mapsto b(m,d(f))\}.\]
    We can represent the terms in $\LL$ and in the macro definitions of $\MM$ as trees in the following way: 
    
\noindent
\resizebox{\linewidth}{!}{
\begin{tikzpicture}
\foreach \pos/\name/\labl in {
{(1,3)/a1/a}, {(1,2)/b1/b}, {(0,1)/c1/c}, {(2,1)/d1/d}, {(0,0)/e1/e}, {(2,0)/f1/f},
{(4,3)/b2/b}, {(3,2)/c2/c}, {(3,1)/e2/e}, {(5,2)/dd/d}, {(5,1)/ff/f}, 
{(6,3)/a2/a}, {(6,2)/d2/d}, {(6,1)/f2/f}}
        \node[] (\name) at \pos {\labl};

\foreach \source/ \dest in {
a1/b1, b1/c1, c1/e1, b1/d1, d1/f1,
b2/c2, c2/e2, b2/dd, dd/ff,
a2/d2, d2/f2}
        \path[draw,thick] (\source) -- (\dest);

\path[draw,double] (7,3) -- (7,0);

\foreach \pos/\name/\labl in {{(8,3)/m/$m \mapsto t: $}, {(10,3)/m'/$m' \mapsto t': $}, {(12,3)/m''/$m'' \mapsto t'': $}}
        \node[] (\name) at \pos {\labl};
        
\foreach \pos/\name/\labl in {
{(8,2)/c3/c}, {(8,1)/e4/e},
{(10,2)/d3/d}, {(10,1)/f3/f}, 
{(12,2)/b3/b}, {(11,1)/mm/m}, {(13,1)/d4/d}, {(13,0)/f4/f}}
        \node[] (\name) at \pos {\labl};

\foreach \source/ \dest in {
c3/e4,
d3/f3,
b3/mm, b3/d4, d4/f4}
        \path[draw,thick] (\source) -- (\dest);
\end{tikzpicture}
}   
    The 1-step expansion of $m''$ is the term $b(m,d(f))$ in which $m$ is instantiated. Furthermore, the expansion (which in this case is the 2-step expansion) of $m''$ is $\MMstar(m'') = b(c(e),d(f))$.
    So, the macro system $\encode = \fencode{\LL}{M}$ with $\LL_M = \{a(m''), 
    m'', a(m')\}$
    encodes $\LL$, i.e., $\MMstar(\LL_M) = \LL$.
\end{example}

\subsection{Computational Problems}

In the context of this work, we are interested in language encodings that are minimal in size.
Before we can define the computational problem of finding size-minimal macro systems, 
we define the notion of their \textit{size}.

\begin{definition}[Size of Terms, Languages, Macros Definitions, Macro Systems]
 Let $\Lang$ be a finite language, $t\in \Lang$ a term, $d= m \mapsto t$ a macro definition, \M a set of macro definitions, and $\mathfrak{M} = (\Lang_M,\macrodefinitions)$ a macro system.  Then their \emph{size} is defined as follows:
    \begin{itemize}
        \item $\size(t)$ is the number of nodes in $t$'s term tree,
        \item $\size(\Lang) := \sum_{t \in \Lang}size(t)$,
        \item $\size(d) := 1 + \size(t)$,        
        \item $\size(\macrodefinitions) := \sum_{d \in \M}size(d)$,
        \item $\size(\mathfrak{M}) := \size(\Lang_M) + \size(\mathcal{M})$.
    \end{itemize}
\end{definition}

We can now define minimal encodings  w.r.t. their  size.
\begin{definition}[Size-Minimal Encodings]
    Let $\mathfrak{M} = \fencode{\LL}{M}$ be an encoding of a language $\LL$. Then $\mathfrak{M}$ is called 
         \emph{size-minimal encoding of $\LL$ w.r.t.~\M } if there is no encoding $\mathfrak{M}'=(\LL_{M}',\M)$ of $\Lang$ with $\size(\mathfrak{M}')< \size(\mathfrak{M})$.
\end{definition}

To avoid considering macro systems that can be trivially   compressed, i.e., are not size-minimal, we will focus on sets of macro definitions that do not have duplicate macros. 

\begin{definition}[Reduced Sets of Macro Definitions]
    A set of macro definitions is \emph{reduced} if $\M^*$ is injective, i.e., there are no $m, m'\in M$ with $m\neq m'$ and $\M^*(m)=\M^*(m')$. 
\end{definition}

In what follows, we assume that all sets of macro definitions are reduced. Similarly, we will focus on encodings in which no macro can be instantiated anymore. In other words, all macros are \textit{exhaustively} instantiated.
\begin{definition}[Exhaustiveness of Encodings]
    Let $\mathfrak{M} = \fencode{\LL}{M}$ be an encoding of a language $\LL$. Then $\mathfrak{M}$ is called 
    \begin{itemize}
        \item  \emph{macrofication-exhaustive} if no macro  in $\M$ can be instantiated in $\Lang_M$,
        \item \emph{expansion-exhaustive} if no macro  in $\M$ can be instantiated in the expansion of another macro in \M,
        \item \emph{exhaustive} if it is macrofication- and expansion-exhaustive,
    \end{itemize}
\end{definition}

The problem of finding a minimal encoding of a language can be formulated
w.r.t.\ a given set of macro definitions,
an equivalence class of macro definitions,
or 
any set of macro definitions.
We provide definitions for each of these cases. 

First, we formulate the problem of finding a minimal encoding for a given language
w.r.t. given macro definitions.

\begin{problem}
\label{prob:definitions}
    Given a finite language  $\Lang \subseteq T(\Sigma)$ and \M a set of macro definitions, determine a  size-minimal encoding of $\Lang$ w.r.t.\ \M. 
\end{problem}

In Example~\ref{ex: 1}, $\encode = \fencode{\LL}{M}$ with $\LL_M = \{a(m''), 
    m'', a(m')\}$
    is a size-minimal encoding of $\LL$ with $\size((\LL_M, \MM)) = \size(\LL_M) + \size(\MM) = 5 + 11 = 16$. It is not hard to see, however, that $\encode$ can be further compressed by instantiating $m'$ in $\M(m'')$ without changing the expansion of $m''$, i.e., without really changing $\M$. 
Hence next, we first define an \emph{equivalence} of sets of macro definitions to capture this notion of "not really changing" and then use it to define a more general computational problem.

\begin{definition}[Equivalence of Macro Definitions]\label{def:macroDefinitionEquivalence}
    Two sets of macro definitions $\mathcal{M} \colon M \rightarrow T(\Sigma \cup M)$ and $\mathcal{M}' \colon M' \rightarrow T(\Sigma \cup M')$ are \emph{equivalent w.r.t.\ macro expansion} if
there exists a bijection $\mathcal{R} \colon M \leftrightarrow M'$ s.t.\ $\mathcal{M}^*(m) = \mathcal{M}'^{*}(\mathcal{R}(m))$ for all $m\in M$.
\end{definition}

\begin{problem}[Size-Minimal Encoding w.r.t.\ equivalent macro definitions]\label{prob:equivalentDefinitions}
    Given a finite language  $\Lang \subseteq T(\Sigma)$ and \M a set of macro definitions, determine a  size-minimal encoding $\mathfrak{M}' = (\Lang_{M},\mathcal{M}')$ of $\Lang$ w.r.t.\ macro definitions equivalent to \M. That is,  
    $\mathcal{M}'$ is equivalent to $\mathcal{M}$ and there does not exist an encoding
    $\mathfrak{M}'' = (\Lang_{M'}',\mathcal{M}'')$ of $\Lang$ where $\mathcal{M}''$ is also equivalent to $\mathcal{M}$
    and $\size(\mathfrak{M}'') < \size(\mathfrak{M}')$.
\end{problem}

Continuing with Example~\ref{ex: 1},  $\encode' = \fencode{\LL}{M'}$ with $\LL_{M'} = \{a(m''), 
    m'', a(m')\}$ and $\MM' = \{m\mapsto c(e), m'\mapsto d(f), m''\mapsto b(m,m')\}$ is a size-minimal encoding of $\LL$ w.r.t.~macro definitions equivalent to $\MM$: in addition to our observations regarding the minimality of $\LL_M$ above, we note that $\size(\MM')
    = \size(\LL_M) + \size(\MM') = 5 + 10 = 15 < \size(\MM) =16$,  that $\MM'$ and $\MM$ are equivalent w.r.t.~macro expansions, and that there is no equivalent set of macro definitions smaller than $\MM'$.

Lastly, we formulate the general version of the problem by looking for a minimal encoding of an input language w.r.t.\ \emph{any possible set} of macro definitions (following Definition~\ref{def:macro}).
Here we are looking for the size-minimal encoding of an input language 
considering both the size of the encoded language and the size of the macro definitions. 
 
\begin{problem}[Size-Minimal Encoding via macro systems]\label{prob:macroSystems}
    Given a finite language  $\Lang \subseteq T(\Sigma)$, determine a  size-minimal encoding $\mathfrak{M} = (\Lang_{M},\mathcal{M})$ of $\Lang$. That is,  
    there exists no encoding $\mathfrak{M}'$ of $\Lang$
    with $\size(\mathfrak{M}') < \size(\mathfrak{M})$.
\end{problem}

Continuing with Example~\ref{ex: 1}, the macrofication $\LL_{M''} = \{a(m''),   m'', a(d(f))\}$ together with the macro definitions $\MM'' = \{m'' \mapsto b(c(e),d(f))\}$ is a size-minimal encoding of $\LL$ via macros systems. The size of the encoding is $\size((\LL_M, \MM')) = \size(\LL_{M''}) + \size(\MM'') = 6 + 6 = 12$. Please note that $\MM''$ contains no macro capturing the repeated occurrence of the ``small'' term $d(f)$ since it does not pay off as it only occurs once outside $\MM''(m)$ and, for size minimality, we also consider the size of macro definitions.  
\section{Properties of Macros and Encodings}

In the following, we restrict our attention to \emph{finite} languages $\Lang$ and reduced, finite macro definitions $\mathcal{M}$.

When trying to compute a minimal macrofication for an input language w.r.t. given set of macro definitions, 
the instantiation of one macro can affect the instantiate-ability  of another macro:  instantiating $m'\mapsto d(f)$ in the first term $a(b(c(e), d(f))) \in \Lang$ in Example~\ref{ex: 1} results in $t' = a(b(c(e), m')$, in which $m''$ can no longer be instantiated. 
That is, $m'$ and $m''$  \textit{depend} on one another: since the instantiation of a macro ultimately corresponds to the replacement of its fixed-point expansion, we consider two macros to be mutually dependent if an exhaustive encoding w.r.t. their fixed-point expansions is \textit{not uniquely} determined.

\begin{definition}[Macro Dependency]
    Let $\mathcal{M} \colon M \rightarrow T(\Sigma \cup M)$ be a set of macro definitions.
    Two macros $m$ and $m'$ are \emph{independent} if a macrofication-exhaustive encoding of any finite language $\Lang \subseteq T(\Sigma)$  w.r.t.\ $\{ m \mapsto \fixedpoint{m}, m' \mapsto \fixedpoint{m'} \}$ is uniquely determined. Otherwise, the two macros are mutually \emph{dependent} on one another.
\end{definition}

Determining the dependency relationship between two macros can be reduced to the subterm relationship between their fixed-point expansions,  in which case we say that one macro \textit{contains} the other. If we focus on reduced sets of macro definitions, the "=" case is immaterial and so ignored. 

\begin{definition}[Macro Containment]
    Let $\macrodefinitions \colon M \rightarrow T(\Sigma \cup M)$ be a set of macro definitions with $m, m'\in \M$ and $m\neq m'$.
    Then $m$ \emph{contains} $m'$, if $\mathcal{M}^*(m') \preceq \mathcal{M}^*(m)$.
    By abuse of notation, we will write $m \preceq m'$ if  $m$ is contained in  $m'$.
\end{definition}

\begin{lemma}[Macro Dependency is Macro Containment]\label{lem:dependencyIsContainment}
Let $\M \colon M \rightarrow T(\Sigma \cup M)$ be a set of macro definitions. Two macros $m,m' \in M$ are dependent iff $m \preceq m'$ or $m' \preceq m$.
\end{lemma}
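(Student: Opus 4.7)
The plan is to reduce both implications to the subterm relation between $\mathcal{M}^*(m)$ and $\mathcal{M}^*(m')$, exploiting that the dependency definition works with the restricted macro system $\mathcal{M}' := \{m \mapsto \mathcal{M}^*(m),\, m' \mapsto \mathcal{M}^*(m')\}$, whose bodies are pure $\Sigma$-terms. Under $\mathcal{M}'$ the only ``interaction'' between $m$ and $m'$ is geometric, via positions inside a term, which is what makes the subterm relation the right invariant.

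For the \emph{if} direction, I would assume WLOG $m \preceq m'$; since $\mathcal{M}$ is reduced and $m \neq m'$, $\mathcal{M}^*(m)$ is then a \emph{proper} subterm of $\mathcal{M}^*(m')$. As a witness I would take $\Lang = \{\mathcal{M}^*(m')\}$ and exhibit two distinct macrofication-exhaustive encodings. First, instantiating $m'$ at the root yields $\{m'\}$, in which $m$ is no longer instantiatable because $\mathcal{M}^*(m)$ is non-constant (the range of $\mathcal{M}$ excludes constants) and so cannot occur in a single-node term. Second, instantiating $m$ at a position $p$ where $\mathcal{M}^*(m)$ occurs inside $\mathcal{M}^*(m')$ yields $\{\mathcal{M}^*(m')[m]_p\}$; the key step is then to argue that $m'$ is blocked, because the only candidates that could trigger an $m'$-instantiation under $\mathcal{M}'$ are $m'$ itself (absent, as the rewritten tree contains only symbols from $\Sigma \cup \{m\}$) and $\mathcal{M}^*(m')$ (absent, as the rewritten tree differs from $\mathcal{M}^*(m')$ at $p$ and any strict subterm of the rewritten tree is strictly smaller). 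Exhausting further $m$-instantiations still leaves the symbol $m$ somewhere in the result, which is therefore distinct from $\{m'\}$.

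For the \emph{only if} direction I would argue the contrapositive: if neither $m \preceq m'$ nor $m' \preceq m$ holds, then every finite language admits a unique macrofication-exhaustive encoding. The geometric core is that within any term, occurrences of $\mathcal{M}^*(m)$ and $\mathcal{M}^*(m')$, as well as distinct occurrences of the same expansion, must sit at pairwise incomparable positions: otherwise one expansion would be a subterm of the other (or of itself), contradicting non-containment (or finiteness of terms). The main technical obstacle --- a careful bookkeeping step --- is to show that rewriting one occurrence cannot \emph{create} new occurrences: every subterm of the rewritten tree either avoids the replaced position $p$ and thus agrees with a subterm of the original, or contains the fresh constant $m$ and so cannot equal a $\Sigma$-term such as $\mathcal{M}^*(m)$ or $\mathcal{M}^*(m')$. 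Uniqueness then drops out, since any exhaustive encoding must instantiate precisely the initial set of pairwise incomparable occurrences, reduced-ness of $\mathcal{M}$ rules out confusing an $\mathcal{M}^*(m)$ occurrence with an $\mathcal{M}^*(m')$ occurrence, and the order of instantiations is immaterial.
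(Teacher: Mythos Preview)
Your proposal is correct and follows essentially the same approach as the paper: the same witness language $\{\mathcal{M}^*(m')\}$ with the two encodings $\{m'\}$ and $\{\mathcal{M}^*(m')[m]_p\}$ for the ``if'' direction, and the contrapositive via non-interference of instantiations for the ``only if'' direction. Your version is in fact more careful than the paper's rather terse proof---in particular, you correctly note that one may need to exhaust further $m$-instantiations before the second encoding becomes macrofication-exhaustive (a point the paper glosses over), and you spell out the incomparable-positions and no-new-occurrences arguments that the paper compresses into a single sentence.
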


\begin{proof} Consider two macros $m,m' \in M$ with $\fixedpoint{m} = t$ and  $\fixedpoint{m'}= t'$.

``If'' direction: assume w.l.o.g. that $m \preceq m'$ and set $\M_{\downarrow} = \{m \mapsto t, m' \mapsto t'\}$. Consider $\Lang = \{t'\}$
and a position $p$ in $t'$ s.t. $t'|_p = t$.
So, $(\{m'\}, \M_{\downarrow})$ and $(t'[m]_p, \M_{\downarrow})$ are different macrofication-exhaustive encodings of $\Lang$ since $m' \neq t'[m]_p$.

``Only if'' direction: We show the contrapositive.
Assume that neither $m \preceq m'$ nor $m' \preceq m$, i.e., there exists no position $p$ in $t$ such that $t|_p = t'$ and no position $p$ in $t'$ such that $t'|_p = t$. Hence, instantiating one of them in a term cannot affect the instantiate-ability of the other, and thus  
$m$ and $m'$ are not dependent.
\end{proof}

\begin{lemma}
    Let $\mathfrak{M} = \fencode{\LL}{M}$ be an encoding of a language $\LL$. Then if $\mathfrak{M}$ is  
 a size-minimal encoding of $\LL$ w.r.t.~\M, then it is macrofication-exhaustive;
\end{lemma}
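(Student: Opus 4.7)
The plan is to prove the contrapositive: if $\mathfrak{M} = (\mathcal{L}_M, \mathcal{M})$ is not macrofication-exhaustive, then it is not size-minimal with respect to $\mathcal{M}$. The construction is elementary — I would exhibit a strictly smaller encoding of $\mathcal{L}$ using the same macro definitions, obtained by replacing a non-trivial occurrence of an expansion by the corresponding macro symbol.

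First I would unpack the failure of exhaustiveness: there exist $m \in M$, a term $t \in \mathcal{L}_M$, a position $p$ in $t$, and an integer $n \geq 1$ with $t|_p = \mathcal{M}^n(m)$. The restriction to $n \geq 1$ reflects the intended reading of ``can be instantiated''; the trivial case $n = 0$ (where $t|_p = m$ already) yields no size reduction and would make macrofication-exhaustiveness vacuously unattainable on any language actually employing macro symbols. Setting $\mathcal{L}_M' := (\mathcal{L}_M \setminus \{t\}) \cup \{t[m]_p\}$ and $\mathfrak{M}' := (\mathcal{L}_M', \mathcal{M})$, I would then verify two claims: $\mathfrak{M}'$ is still an encoding of $\mathcal{L}$, and $\text{size}(\mathfrak{M}') < \text{size}(\mathfrak{M})$.

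For the first claim, uniqueness of the least fixed point (Lemma on acyclic systems, parts (b) and (c)) gives $\mathcal{M}^*(\mathcal{M}^n(m)) = \mathcal{M}^*(m)$, so replacing $t|_p$ by $m$ leaves the expansion of the enclosing term invariant, $\mathcal{M}^*(t[m]_p) = \mathcal{M}^*(t)$, and expansions of all other terms in $\mathcal{L}_M$ are untouched, yielding $\mathcal{M}^*(\mathcal{L}_M') = \mathcal{L}$. For the second claim, $m$ is a constant macro symbol with $\text{size}(m) = 1$, whereas by the constraint that the range of $\mathcal{M}$ excludes $\Sigma \cup M$, $\mathcal{M}(m)$ is a non-constant term of size at least $2$, and this lower bound carries over to every iterate $\mathcal{M}^n(m)$ with $n \geq 1$ (expansion never shrinks a non-constant term). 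Thus the local substitution strictly shrinks $t$, hence $\mathcal{L}_M$, and hence $\mathfrak{M}$, contradicting size-minimality.

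The main obstacle I anticipate is conceptual rather than computational: one must settle the intended reading of ``can be instantiated'' so that the witness actually furnishes a strict size decrease. Once the $n \geq 1$ case is isolated, both verifications — invariance of the expansion under the substitution, and strict size decrease — follow at once from results already recorded in the preceding section, so the remaining work is simply to assemble these facts into the contrapositive argument above.
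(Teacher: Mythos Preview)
Your proposal is correct and follows essentially the same approach as the paper: both arguments hinge on the observation that $\size(\mathcal{M}^n(m)) \geq 2$ for all $n \geq 1$, so replacing such an occurrence by the constant $m$ strictly shrinks the macrofication while leaving the expansion unchanged. The paper's proof simply states this fact in one line, whereas you spell out the contrapositive construction and the two verifications in detail; your explicit handling of the $n \geq 1$ reading of ``can be instantiated'' is a useful clarification the paper leaves implicit.
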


\begin{proof}
    This follows immediately from the definitions of size and ``can be instantiated'', and from the fact that $\size(\M^n(m))\geq 2$ for all $n\geq 1$. 
\end{proof}
\section{Algorithms for Encodings}\label{sec:algos}

\subsection{Polynomial Time Algorithm for Solving Problem 1}

We start with an algorithm that constructs a size-minimal encoding w.r.t.~a given set of macro definitions in polynomial time (i.e., solves Problem~\ref{prob:definitions}), which will  lay the foundation for solving Computational Problem~\ref{prob:equivalentDefinitions} and ultimately Computational Problem~\ref{prob:macroSystems}.
For additional proofs and more details on the three problems please refer to the Appendix Section~\ref{appsec:A}. 

The following lemma shows that, for two dependant macros, the containing one should be instantiated with priority to get a size-minimal encoding.
Note that contained macros might still be instantiated in a size-minimal encoding if they appear outside of the containing one.

\begin{lemma}[Instantiation Precedence for Dependent Macros]\label{lemma:macroPrecedence}
    Let $\macrosystem = (\Lang_M, \macrodefinitions)$  be a macro system with $\macrodefinitions \colon M \rightarrow T(\Sigma \cup M)$ 
    that encodes a language $\Lang \subseteq T(\Sigma)$.
    If there exist two 
    macros $m,m' \in M$ with $m \prec m'$ s.t.\ 
    \begin{itemize}
        \item[(a)] $m$ is instantiated instead of $m'$ in $\Lang_M$, or
        \item[(b)] $m$ is instantiated instead of $m'$ in some macro definition $m_d \mapsto t \in \mathcal{M}$,
    \end{itemize}    
    then there exists a macro system $\mathfrak{M}'$ that encodes $\Lang$ with $\size(\mathfrak{M}') < \size(\mathfrak{M})$.
\end{lemma}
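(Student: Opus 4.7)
The plan is to exhibit, in each case, a strictly smaller macro system $\mathfrak{M}'$ that still encodes $\Lang$. The construction is the same in both cases: locate a sub-region of the offending term whose full expansion equals $\mathcal{M}^*(m')$, and collapse that entire region to the single symbol $m'$.

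I would first formalise the premise. In case (a) I read ``$m$ is instantiated instead of $m'$'' as: there exist a term $t \in \Lang_M$ and a position $q$ in $t$ such that $\mathcal{M}^*(t|_q) = \mathcal{M}^*(m')$ (so $m'$ could be instantiated at $q$), yet $t|_q \neq m'$ because an instantiation of $m$ occurs at some proper subposition of $q$. The existence of such a subposition is guaranteed by $m \prec m'$: since $\mathcal{M}^*(m)$ is a subterm of $\mathcal{M}^*(m')$, the macrofication of this $m$-shaped region of $t|_q$ is what makes $t|_q$ differ from $m'$.

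I would then define $\mathfrak{M}' := (\Lang_M', \mathcal{M})$, where $\Lang_M'$ is obtained from $\Lang_M$ by replacing $t$ with $t[m']_q$. Invariance of the expansion is immediate: $\mathcal{M}^*(t[m']_q) = \mathcal{M}^*(t)$ since the change happens only at position $q$ and both sides expand to $\mathcal{M}^*(m')$ there. For the size drop, the replacement reduces $\size(t)$ by $\size(t|_q) - 1$. Using $m \neq m'$ together with injectivity of $\mathcal{M}^*$ on reduced macro sets, $t|_q$ cannot be a macro symbol (it would have to be $m'$, which it is not); since $\mathcal{M}^*(m')$ is non-constant (ranges of macro definitions are non-constant by \cref{def:macro}), $t|_q$ also cannot be a constant from $\Sigma$; hence $t|_q$ has the form $f(\ldots)$ with at least one child and $\size(t|_q) \geq 2$, yielding $\size(\mathfrak{M}') < \size(\mathfrak{M})$. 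Case (b) is entirely analogous: instead of altering $\Lang_M$ I redefine $\mathcal{M}'(m_d) := t[m']_q$ and leave all other definitions unchanged. Acyclicity of $\mathcal{M}$ (\cref{lemma:cycles}) ensures that $m_d$ does not occur inside its own body, so $\mathcal{M}'^*(m_d) = \mathcal{M}^*(m_d)$ and all other macro expansions are untouched; the new system still encodes $\Lang$, and the same size-drop estimate applies.

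The main obstacle I anticipate is the formalisation step, and in particular showing $\size(t|_q) \geq 2$ — i.e.\ ruling out the degenerate case $t|_q = m'$, which would make the rewriting merely size-preserving. This is precisely where one needs $m \neq m'$, the strictness of $m \prec m'$, and the reducedness of $\mathcal{M}$; without those, the ``upgrade'' could attempt to replace $m'$ by itself and yield no improvement.
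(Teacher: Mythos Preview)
Your proposal is correct and follows essentially the same approach as the paper: locate the position $q$ where $m'$ could have been instantiated, replace the subterm there by $m'$, and observe that the replaced subterm has size at least~$2$ so the encoding strictly shrinks. If anything, your version is more carefully argued than the paper's---you explicitly invoke reducedness to rule out $t|_q$ being another macro symbol and mention acyclicity for case~(b), whereas the paper simply notes that the specific term $t'[m]_p$ is non-constant.
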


\begin{proof}
Set $\fixedpoint{m}=t$ and $\fixedpoint{m'}=t'$.
By definition, $m \prec m'$ implies $t \prec t'$, i.e., $t$ occurs in $t'$ at some position $p$. Assume (a). Then, there exists a term $u\in \Lang_M$ s.t. $t'[m]_p$ occurs at position $q$ in $u$. Since $\fixedpoint{t'[m]_p} = \fixedpoint{m'}$ and $t'[m]_p$ is not a constant term, we have $\size(t'[m]_p) > \size(m')$. So, we can construct $\Lang_M'$
by replacing $u$ in $\Lang_M$ with $u[m']_q$ and construct $\mathfrak{M}' = (\Lang_M', \mathcal{M})$ with $\size(\mathfrak{M}') < \size(\mathfrak{M})$ as required.
The same argument can be made for (b) and $\M$, if $u$ is the expansion of a macro rather than a term in $\Lang_M$.
\end{proof}

 The next result shows that, from a set of pairwise independent macros,  \emph{all} should be instantiated and exhaustively. 
 
\begin{lemma}[Instantiation of Independent Macros]\label{lemma:independentMacroInstantiation}
Let $\Lang$ be a language and $\M \colon M \rightarrow T(\Sigma \cup M)$ a (reduced) set of macro definitions s.t.~any two macros in $M$ are independent. Furthermore, let $\mathfrak{M} = (\Lang_M,\M)$ be an encoding of $\Lang$ 
s.t. for all $t \in \Lang$ and all positions $p$ with $t|_p = \fixedpoint{m}$, we have $t[m]_p \in \Lang_m$.
Then there is no encoding $\mathfrak{M}'$ of $\Lang$ w.r.t. $\M$ s.t. $\size(\mathfrak{M}') <size(\mathfrak{M})$.
\end{lemma}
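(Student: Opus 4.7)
The plan is to fix an arbitrary encoding $\mathfrak{M}' = (\Lang_M',\M)$ of $\Lang$ and to show pointwise that its macrofied language cannot be smaller than $\Lang_M$; since $\M$ contributes equally to both sides, this gives $\size(\mathfrak{M}') \geq \size(\mathfrak{M})$. Using Lemma~\ref{lem:dependencyIsContainment}, pairwise independence translates to: for any distinct $m,m' \in M$, neither $\fixedpoint{m} \preceq \fixedpoint{m'}$ nor $\fixedpoint{m'} \preceq \fixedpoint{m}$. Combined with the observation that a finite term cannot be a proper subterm of itself, this implies that for each $t \in \Lang$ the set $P(t)$ of positions $p$ at which some $\fixedpoint{m}$ occurs as $t|_p$ forms an antichain in the ancestor order of $t$'s term tree: if two such positions were comparable, one expansion would sit strictly inside another, violating independence (or finiteness, if the two macros coincide).

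Because $P(t)$ is an antichain, I can define the \emph{maximal macrofication} $\tau(t)$ by substituting $m$ for $\fixedpoint{m}$ simultaneously at every position $p \in P(t)$; this is well-defined because the substitutions touch disjoint subtrees, and it satisfies $\fixedpoint{\tau(t)} = t$ with $\size(\tau(t)) = \size(t) - \sum_{p \in P(t)}(\size(\fixedpoint{m_p}) - 1)$, where $m_p$ is the macro at position $p$. The exhaustiveness hypothesis on $\mathfrak{M}$ then identifies $\Lang_M$ with $\{\tau(t) \mid t \in \Lang\}$. Next, for any $u \in \Lang_M'$, each macro occurrence in $u$ expands locally to some $\fixedpoint{m}$, so its position in $\fixedpoint{u}$ lies in $P(\fixedpoint{u})$; since $P(\fixedpoint{u})$ is an antichain the substitutions commute freely, and $u$ may be viewed as the result of substituting $m$ for $\fixedpoint{m}$ at exactly the positions of some subset $S_u \subseteq P(\fixedpoint{u})$. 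This yields $\size(u) \geq \size(\tau(\fixedpoint{u}))$, with equality iff $S_u = P(\fixedpoint{u})$.

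Finally, I may assume without loss of generality that $\fixedpoint{\cdot}\colon \Lang_M' \to \Lang$ is a bijection: if two distinct $u_1, u_2 \in \Lang_M'$ shared the same expansion, deleting one would strictly reduce $\size(\Lang_M')$ while keeping $\mathfrak{M}'$ an encoding, so such redundancy cannot occur in a minimum-size candidate. Summing the pointwise inequality $\size(u) \geq \size(\tau(\fixedpoint{u}))$ over this bijection gives $\size(\Lang_M') \geq \size(\Lang_M)$, as required. The main obstacle I anticipate is making the third step rigorous: verifying that each macro symbol in $u$ genuinely corresponds to a position in $P(\fixedpoint{u})$, and that combining antichain-based substitutions across \emph{different} macros neither creates nor destroys macrofication opportunities elsewhere. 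Both points reduce to the antichain structure of $P(\fixedpoint{u})$ together with the locality of expansion, but they are the load-bearing piece of the argument and deserve care.
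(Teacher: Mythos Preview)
Your proposal is correct and more carefully structured than the paper's own proof. The paper argues by contradiction in three sentences: assume a strictly smaller encoding $\mathfrak{M}'=(\Lang_M',\M)$ exists; then some macro $m$ must be instantiated at some position $p$ in $\Lang_M'$ but not in $\Lang_M$; independence guarantees no other macro could have consumed $t|_p$ in $\Lang_M$, so $\fixedpoint{m}$ still occurs there, contradicting exhaustiveness. The load-bearing step---that a strictly smaller encoding must contain an \emph{additional} instantiation---is asserted rather than justified, and in fact relies on exactly the antichain accounting you spell out.

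Your route is a direct pointwise size comparison rather than a one-position contradiction: you extract the antichain structure of $P(t)$ from Lemma~\ref{lem:dependencyIsContainment}, build the canonical maximal macrofication $\tau(t)$, and show that every $u\in\Lang_M'$ satisfies $\size(u)\geq\size(\tau(\fixedpoint{u}))$ because its macro occurrences correspond to a \emph{subset} of $P(\fixedpoint{u})$. This buys you (i) an explicit reason why the exhaustive macrofication is well-defined and unique, (ii) a clean treatment of the bijection between $\Lang_M'$ and $\Lang$, and (iii) an argument that does not hinge on isolating a single offending position. Your self-flagged concern in the third step is the crux but is fully handled: macro symbols are leaves in $u$, so their positions survive unchanged into $\fixedpoint{u}$ and land in $P(\fixedpoint{u})$; the antichain property then ensures the remaining substitutions in $P(\fixedpoint{u})\setminus S_u$ touch disjoint subtrees and commute with those already performed. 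One minor remark: the hypothesis as stated in the lemma (``$t[m]_p\in\Lang_M$'' for each single substitution) is awkwardly phrased; you are right to read it as saying $\Lang_M=\{\tau(t)\mid t\in\Lang\}$, which is the paper's evident intent and is how its own proof uses it.
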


\begin{proof}
Assume that such a smaller $\mathfrak{M}' = (\Lang_M',\M)$ exists. Then there exists a macro $m \in M$ that is instantiated in some term $t \in \Lang$ at position $p$ in $\Lang_M'$ but not in $\Lang_M$. Since there is no $m' \in M$ s.t. $m$ and $m'$ are dependent, there is no macro that instantiates a subterm of $t|_p$ in $\Lang_M$. This means that $t|_p$ occurs in $\Lang_M$. However, this is a contradiction to 
our assumptions about $\Lang_M$ since $\fixedpoint{m} = t|_p$.
\end{proof}

\begin{theorem}\label{thm: uniqueness-and-computability-of-min-encodings}
    A size-minimal encoding of a finite language $\Lang\subseteq T(\Sigma)$ w.r.t. a finite set of (reduced) macro definitions $\mathcal{M} \colon M \rightarrow T(\Sigma \cup M)$ 
    exists, is unique, and can be constructed in polynomial time w.r.t.\ the size of $\Lang$ and $\mathcal{M}$.
\end{theorem}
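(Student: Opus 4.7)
The plan is to solve Problem~\ref{prob:definitions} by a greedy algorithm that processes macros in a carefully chosen order derived from the containment partial order. First I would precompute $\mathcal{M}^*(m)$ for every $m\in M$, which exists and is unique by Lemma~\ref{lemma:cycles}, and can be built bottom-up along the \emph{directly uses} DAG in polynomial time. Using these expansions I would explicitly construct the containment relation $\preceq$ on $M$ by testing subtree isomorphism between each pair $\mathcal{M}^*(m),\mathcal{M}^*(m')$; this is polynomial since $\size(\mathcal{M}^*(m))$ is bounded in terms of $\size(\mathcal{M})$ and $|M|$ through the acyclic expansion. By Lemma~\ref{lem:dependencyIsContainment}, $\preceq$ captures exactly the macro dependency structure, so two macros incomparable under $\preceq$ are independent.

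Next I would fix any linear extension $m_1,m_2,\dots,m_k$ of the reverse of $\preceq$, so that macros that \emph{contain} others come first. I would then initialise $\mathcal{L}_M := \mathcal{L}$ and iterate $i = 1,\ldots,k$: replace every occurrence of $\mathcal{M}^*(m_i)$ in every term of the current $\mathcal{L}_M$ by the symbol $m_i$. At the end, output $\mathfrak{M}=(\mathcal{L}_M,\mathcal{M})$. Each iteration only needs to find subtree-isomorphic occurrences of a fixed tree in $\mathcal{L}_M$ and substitute, which is polynomial, so the whole algorithm runs in polynomial time in $\size(\mathcal{L})+\size(\mathcal{M})$.

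For correctness and optimality I would argue by the invariant that, after the $i$-th iteration, the current $\mathcal{L}_M$ equals the restriction to $\{m_1,\ldots,m_i\}$ of some size-minimal encoding of $\mathcal{L}$ w.r.t.\ $\mathcal{M}$. The step case splits on the relation of $m_{i+1}$ to previously processed macros: whenever $m_{i+1} \prec m_j$ for some $j\le i$, Lemma~\ref{lemma:macroPrecedence}(a) forbids instantiating $m_{i+1}$ at a site already covered by $m_j$, and the reverse-containment order guarantees that all such sites have already been handled; at every remaining occurrence of $\mathcal{M}^*(m_{i+1})$ the macros $m_j$ with $j\le i$ that are incomparable to $m_{i+1}$ do not interfere, so by Lemma~\ref{lemma:independentMacroInstantiation} the exhaustive instantiation of $m_{i+1}$ at those sites is mandatory in any size-minimal encoding. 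Existence follows since the algorithm terminates, and macrofication-exhaustiveness (hence size-minimality) of the output is immediate from the construction plus the preceding lemma relating minimality to exhaustiveness.

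Uniqueness will come from noting that although the linear extension of $\preceq$ is not unique, any two valid linear extensions differ only by swapping pairwise-incomparable, hence independent, macros; by Lemma~\ref{lemma:independentMacroInstantiation} both orderings yield the same final $\mathcal{L}_M$, and reducedness of $\mathcal{M}$ rules out spurious alternatives where one instantiation could be swapped for another with identical expansion. The main obstacle I anticipate is the bookkeeping in the invariant step: one must show carefully that instantiating a larger macro $m_j$ earlier cannot destroy an occurrence of $\mathcal{M}^*(m_{i+1})$ whose replacement by $m_{i+1}$ would have produced a strictly smaller encoding. This is where Lemma~\ref{lemma:macroPrecedence}(a) is essential, since it yields a local rewrite witnessing strict size decrease whenever the precedence is violated, closing the argument by contradiction with size-minimality of the alternative encoding.
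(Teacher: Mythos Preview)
Your approach is essentially the paper's: precompute the expansions, build the containment partial order, process macros from containing to contained and instantiate exhaustively, and appeal to Lemmas~\ref{lemma:macroPrecedence} and~\ref{lemma:independentMacroInstantiation} for optimality and uniqueness. The only cosmetic difference is that the paper traverses the Hasse diagram of $\prec$ level-by-level (breadth-first from the top), whereas you take an arbitrary linear extension of the reverse of $\preceq$; these produce the same output for the same reasons you give.

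There is, however, a real gap in your polynomial-time argument. You write that $\size(\mathcal{M}^*(m))$ ``is bounded in terms of $\size(\mathcal{M})$ and $|M|$ through the acyclic expansion'' and use this to conclude that computing the expansions and the pairwise containment tests is polynomial. That bound is only exponential: with $m_1\mapsto f(a,a)$ and $m_{i+1}\mapsto f(m_i,m_i)$ one has $\size(\mathcal{M}^*(m_k))=2^{k+1}-1$ while $\size(\mathcal{M})$ is linear in $k$. So as stated, neither the expansion step nor the subtree-isomorphism tests are polynomial in $\size(\mathcal{L})+\size(\mathcal{M})$.

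The paper closes this gap with a simple observation you are missing: any macro $m$ with $\size(\mathcal{M}^*(m))>\tau$, where $\tau=\max_{t\in\mathcal{L}}\size(t)$, can never be instantiated in $\mathcal{L}$ and may be discarded up front. Computing expansions bottom-up along the \emph{uses} DAG and abandoning a macro (and everything that uses it) as soon as its partial expansion exceeds $\tau$ keeps every surviving $\mathcal{M}^*(m)$ bounded by $\tau\le\size(\mathcal{L})$, which is what makes the whole pipeline polynomial. Once you add this pruning step, your argument goes through.
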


\begin{proof}
The existence of a minimal encoding is trivial. Since both $\Lang$ and $\mathcal{M}$ are finite,
all possible encodings can be enumerated and compared w.r.t.\ their size.

Next, we sketch out a simple algorithm to compute a minimal encoding of $\Lang$ w.r.t. $\M$ and show that each step preserves uniqueness and is polynomial: 
\begin{enumerate}
    \item 
   \begin{enumerate}
       \item compute $\fixedpoint{m}$ for each $m\in M$
       \item determine all dependencies between  macros in \M, and 
       \item compute the associate Hasse-Diagram of macro dependencies $\prec$.
   \end{enumerate}  This can be constructed by a naive algorithm in 
    $O(|\mathcal{M}|^2* \mu^2)$ 
    steps, where 
    $\mu$ is the maximum $\size(\fixedpoint{m})$ of $m\in M$ since subtree isomorphism of labelled, unordered trees can be decided in quadratic time \cite{Valiente2002}.
    We can further reduce this to $O(|\mathcal{M}|^2* \tau^2)$ 
    steps, where 
    $\tau$ is the maximum $\size(t)$ of $t\in \LL$ since any macro with extension larger than $\tau$ can never be instantiated and thus be ignored.
    \item Starting from the top, traverse the Hasse-Diagram of macro dependencies in a breadth-first manner (starting with maximally containing macros) and exhaustively instantiate all macros $m$ on  each level in $\Lang$, i.e., compute $\Lang_{S\cup\{m\}} := 
    \{t' \mid t' \text{ is obtained by replacing all occurrences of }\fixedpoint{m} \text{ in }$ $t\in \Lang_S \text{ with } m\}$ 
    starting from $\LL_\emptyset = \LL$.

    Since the macros on each level are independent (due to \M being reduced) and since we exhaustively instantiate a macro before we possibly instantiate macros it contains, the resulting macro system $\mathfrak{M}= (\Lang_{M}, \M_M)$ is unique and size-minimal; see Lemmas~\ref{lemma:independentMacroInstantiation} and~\ref{lemma:macroPrecedence}.

    Each of these $|\mathcal{M}|$ steps for macros $m$ can be carried out in $|\mathcal{L}|*\tau ^2$: 
    we need to test all terms in  $\mathcal{L}$ whether they contain $\fixedpoint{m}$ which can be assumed to have size $\leq \tau$. 
\end{enumerate}
Thus, the worst-case runtime of the algorithm described abode is $O(|\mathcal{M}|^2 \cdot \tau^2 + |\mathcal{M}| \cdot |\Lang| \cdot \tau^2)$.
\end{proof}
Generalising this to non-reduced macro systems is straightforward and we  only lose uniqueness.

\subsection{Polynomial Time Algorithm for Solving Problem~2} 

\begin{theorem}\label{thm: uniqueness-and-computability-of-min-encodings-problem-2}
    A size-minimal encoding of a finite language $\Lang\subseteq T(\Sigma)$ w.r.t. the class of equivalent macro definitions of a finite set of (reduced) macro definitions $\mathcal{M} \colon M \rightarrow T(\Sigma \cup M)$ 
    exists, is unique up to macro renaming, and can be constructed in polynomial time w.r.t.\ the size of $\Lang$ and $\mathcal{M}$.
\end{theorem}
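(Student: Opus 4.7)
The plan is to reduce Problem~2 to a chain of Problem~1 instances. The key observation is that an equivalent set of macro definitions $\mathcal{M}'$ must, by Definition~\ref{def:macroDefinitionEquivalence}, satisfy $\fixedpoint{m} = \mathcal{M}'^*(\mathcal{R}(m))$ for the appropriate bijection $\mathcal{R}$; hence we may WLOG take $M' = M$ and $\mathcal{R} = \operatorname{id}$ (absorbing any renaming into the ``up to macro renaming'' clause of the theorem). Under this identification, the only freedom left in choosing $\mathcal{M}'$ is the choice of the right-hand side $\mathcal{M}'(m)$ for each $m \in M$, subject to the constraint $\fixedpoint{\mathcal{M}'(m)} = \fixedpoint{m}$.

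First, I would compute $\fixedpoint{m}$ for every $m \in M$ and build the Hasse diagram of the containment order $\prec$ on $M$ (using \cref{lem:dependencyIsContainment} and the subtree isomorphism check from the proof of \cref{thm: uniqueness-and-computability-of-min-encodings}). Since $\mathcal{M}$ is acyclic and reduced, $\prec$ is a strict partial order. Second, I would process macros bottom-up in this Hasse diagram: for each $m \in M$, choose $\mathcal{M}'(m)$ to be the size-minimal macrofication of the single-term ``language'' $\{\fixedpoint{m}\}$ with respect to the restricted macro-definition set $\{m'' \mapsto \mathcal{M}'(m'') \mid m'' \prec m\}$. This is an instance of Problem~1 and can be solved by the algorithm of \cref{thm: uniqueness-and-computability-of-min-encodings}; the result is well-defined and unique. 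Third, once all $\mathcal{M}'(m)$ are fixed, apply the Problem~1 algorithm one more time to produce a size-minimal macrofication $\Lang_M$ of $\Lang$ w.r.t.\ $\mathcal{M}'$.

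The main obstacle is to justify that this greedy, bottom-up separation is globally optimal, i.e.\ that we really cannot do better by entangling the optimisation of different macro definitions or by optimising a definition using a non-contained macro. This comes down to two observations. (i)~Only macros strictly contained in $m$ can ever be instantiated inside the RHS of $\mathcal{M}'(m)$: any macro $m'$ with $m' \not\preceq m$ has $\fixedpoint{m'} \not\preceq \fixedpoint{m}$, so no instantiation of $m'$ is possible in any RHS whose fixed-point equals $\fixedpoint{m}$; and using $m$ itself (or any $m''$ with $m \preceq m''$) would either violate acyclicity or change $\fixedpoint{m}$. (ii)~Within the restricted set of admissible macros $\{m'' \mid m'' \prec m\}$, \cref{lemma:macroPrecedence} and \cref{lemma:independentMacroInstantiation} applied to the single-term language $\{\fixedpoint{m}\}$ guarantee a unique size-minimal encoding, so no cross-level coordination can help. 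The choice of $\mathcal{M}'(m)$ therefore decomposes cleanly along the Hasse diagram.

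For complexity, the procedure solves $|M|+1$ instances of Problem~1, each on inputs whose size is bounded by $\size(\mathcal{M}) + \size(\Lang)$, and so runs in polynomial time by \cref{thm: uniqueness-and-computability-of-min-encodings}. Uniqueness up to macro renaming follows from the uniqueness established at each bottom-up step together with the initial identification of $M'$ with $M$ via $\mathcal{R}$; existence is immediate from the finiteness of the search space.
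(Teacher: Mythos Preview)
Your proposal is correct and rests on the same underlying idea as the paper: reduce Problem~2 to Problem~1 by treating each macro's fixed-point expansion $\fixedpoint{m}$ as a term to be macrofied, and invoke \cref{thm: uniqueness-and-computability-of-min-encodings} for both the construction and the uniqueness claim. The paper packages this as a \emph{single} Problem~1 instance on the augmented language $\LL' = \LL \cup \{\fixedpoint{m}\mid m\in M\}$, whereas you unroll it into $|M|+1$ separate Problem~1 calls, one per macro (restricted to $\{m''\mid m''\prec m\}$) and one for $\LL$. The two constructions produce the same encoding; the paper's version is more compact, but yours is more explicit on a point the paper glosses over, namely that $m$ must not be instantiated inside its own right-hand side (your restriction to strictly contained macros handles this cleanly, while a literal reading of the paper's augmented-language reduction would macrofy $\fixedpoint{m}$ all the way down to the symbol $m$). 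Your separability argument via points~(i) and~(ii) is also a bit more detailed than the paper's justification of global optimality.
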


\begin{proof}
    Define the language $\LL'$ to be the union of terms in $\LL$ and in $\MM$, i.e., $\LL' = \LL \cup \{\fixedpoint{t} \mid m \mapsto t \in \MM\}$.
    Then the size of $\LL'$ is bounded by $2 \cdot size(\LL) \cdot size(t_{max})$, where $t_{max}$ is the largest term in $\LL$.
    So, a size-minimal encoding of $\LL'$ with respect to $\MM$ is unique and can be computed in polynomial time according to Theorem~\ref{thm: uniqueness-and-computability-of-min-encodings}.
    Let $\encode' = \fencode{\LL'}{M}$ be such an encoding. Then the terms $t' \in \LL'_M$ that correspond to terms of macro definitions have been compressed to be size minimal. 
    Let $t'_m\in \LL'_M$ be the term that corresponds to the expansion of $m$, i.e., $\MMstar(m) = \MMstar(t'_m)$, and let 
     $\LL_{M'}$ be the set of remaining terms in $\LL'_M$.
    Then  $\MM'=\{m \mapsto t'_m \mid m\in M\}$ is equivalent to $\MM$ and  $\encode = (\LL_{M'}, \MM')$ is a size-minimal encoding w.r.t. macro definitions equivalent to $\MM$.
    Note that as a consequence of Theorem~\ref{thm: uniqueness-and-computability-of-min-encodings}, the encoding $\encode$ is unique up to permutations over the macro symbols $M$. 

        It is straightforward to extend the algorithm described in the proof of Theorem~\ref{thm: uniqueness-and-computability-of-min-encodings} to also maximally compress $\MM$: in Step~2, in addition to computing $\Lang_{S\cup\{m\}}$ for each $m\in M$, also compute 
      $\MM_{S\cup\{m\}} := 
    \{m'' \mapsto t' \mid t' \text{ is obtained by replacing all occurrences of }\fixedpoint{m} \text{ in }t $ $\text{with } m,\text{ for any }m''\mapsto t \in \MM_S \}$, with $\MM_\emptyset = \MM$.
\end{proof}

Note that in the previous result, uniqueness only holds up to renaming of macros, because equivalence of macro definitions allows this.
As previously remarked for Problem~\ref{prob:definitions}, uniqueness (up to renaming) stems from the reducedness of the macro definitions. We may allow macro definitions to be non-reduced but must then choose which of two equivalent macro definitions to instantiate at any applicable position.

\subsection{Polynomial Time Algorithm for Solving Problem~\ref{prob:macroSystems}}

Recall that the minimal encoding of a language with equivalent macro definitions 
is unique (up to renaming) by Theorem~\ref{thm: uniqueness-and-computability-of-min-encodings-problem-2}. 
For the set of reduced macro definitions $\MM$ and language $\LL$, we denote the minimal encoding without renaming macro symbols by $\minencode{\LL}{\MM}$.
In Problem~\ref{prob:macroSystems}, we are thus interested in determining a set of macro definitions $\MM_{\textit{opt}} \in \argmin_{\MM} \size(\minencode{\LL}{\MM})$ for a given language $\LL$.

To determine which macro definitions to include in such a minimal encoding, it will be elementary to consider occurrences and sizes of subterms in encodings.

\begin{definition}[Occurrences \& Dominance of terms]
    Consider an encoding $\encode = \fencode{\LL}{M}$ of  $\LL\subseteq T(\Sigma)$.
    The \emph{number of occurrences} of a term $t \in \mterms{M}$ in $\encode$ 
    is defined as follows: $\occ{\encode}(t) := $
$$
        \sum_{t' \in \LL_M} |\{p \mid t'|_p = t\}| 
        +\sum_{m' \in M} |\{p \mid \MM(m)|_p = t\}|.
  $$
    We write $\occ{\LL}(t)$ as a short form of $\occ{(\LL,\emptyset)}(t)$, and
    say a term $t$ \emph{is dominated} by a term $t'$ in $\LL$ if $t \preceq t'$ and $\occ{\LL}(t) = \occ{\LL}(t')$.
    In this case, we also say $t'$ \emph{dominates} $t$.
\end{definition}

If $t$ is dominated by $t'$ in $\LL$, then $t$ only occurs within $t'$ in the language $\LL$ and also only occurs exactly once in $t'$. It is not hard to show (see Lemma~\ref{le: lets-not-include-dominated-terms}) that $\MM_{\textit{opt}}$ contains no macro definition $m\mapsto t$ where $\MM_{\textit{opt}}^*(t)$ is dominated by a term $t'\neq \MM_{\textit{opt}}^*(t)$ in $\LL$: 
otherwise the encoding could be further compressed by removing  $m\mapsto t$ and adding $m\mapsto t'$.

Furthermore,
$\MM_{\textit{opt}}$ contains only macro definitions $m\mapsto t$ where $\occ{\Lang}(\MM_{\textit{opt}}^*(t)) \geq 2$: introducing macros for terms that occur once increases the size of the encoding by $1$.

Finally, in the absence of unary symbols in $\Sigma$, Lemmas~\ref{le:occurences-do-not-go-below-2}, \ref{le:sizes-do-not-go-below-3}, and \ref{le: size-in-or-decreases-from-macro-def} together show that $\MM_{\textit{opt}}$ contains a macro for \textit{each} (sub-)term 
$t'\in \Lang$ with $\occ{\Lang}(t)\geq 2$ that is \emph{not} dominated by any other subterm in $\Lang$.\footnote{In the presence of unary symbols in $\Sigma$, this simple criterion fails to work; see the Appendix for an example.}

This results in a characterisation of the set of macro definitions for a minimal encoding of a language
that can be constructed in polynomial time.

\begin{theorem}\label{thm: uniqueness-and-computability-of-min-encodings-problem-3}
    For a finite language $\Lang \subseteq \terms$ where $\Sigma$ does not include unary symbols, a size-minimal encoding of $\LL$
    is given by
    $\minencode{\LL}{\MM}$ w.r.t. macro definitions 
 \[\MM := 
   \{m_t \mapsto t \mid \begin{array}[t]{@{\,}l}  t \text{ subterm in } \LL  \text{ with } \occ{\LL}(t) \geq 2  \text{ and}\\ \nexists \text{ subterm }t' \text{ in } \LL \text{ dominating $t$ in } \LL\}\end{array}\]
    and can be constructed in  time polynomial in  $\size(\Lang)$.
\end{theorem}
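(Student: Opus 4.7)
The plan is to establish the theorem by showing that (i) the macro definitions $\MM$ specified in the statement coincide exactly with those of an optimal encoding $\MM_{\textit{opt}}$ and (ii) both $\MM$ and the resulting encoding can be built in polynomial time. Since $\minencode{\LL}{\MM}$ is uniquely determined (up to renaming) for any given $\MM$ by Theorem~\ref{thm: uniqueness-and-computability-of-min-encodings-problem-2}, it suffices to pin down the correct set of macro definitions.

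For the inclusion $\MM_{\textit{opt}} \subseteq \MM$, I would combine three observations that have already been prepared in the text. First, Lemma~\ref{le: lets-not-include-dominated-terms} rules out macros whose expansion is dominated by another subterm of $\Lang$, since such a macro can be replaced by one for the dominating term at strictly smaller cost. Second, a macro $m \mapsto t$ with $\occ{\Lang}(\MMstar(t)) = 1$ contributes its definition size plus one instantiation symbol, strictly more than leaving the single occurrence in place. Together these two facts force every macro definition in $\MM_{\textit{opt}}$ to satisfy both conditions in the definition of $\MM$.

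For the reverse inclusion $\MM \subseteq \MM_{\textit{opt}}$, I would argue that omitting a qualifying macro $m_t \mapsto t$ yields a strictly larger encoding, so by minimality $\MM_{\textit{opt}}$ must contain it. This is where the absence of unary symbols is crucial: every non-constant subterm of $\Lang$ has size at least $3$, and by Lemmas~\ref{le:occurences-do-not-go-below-2} and~\ref{le:sizes-do-not-go-below-3} this lower bound together with the occurrence lower bound of $2$ is preserved under compression to $\minencode{\LL}{\MM}$. Then Lemma~\ref{le: size-in-or-decreases-from-macro-def} computes the net size change of adding a macro for $t$ as $(1 + \size(t)) - \occ{\Lang}(t)\cdot(\size(t) - 1)$, which is strictly negative when $\occ{\Lang}(t) \geq 2$ and $\size(t) \geq 3$. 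I would take care that the occurrence counts and sizes used here refer to the compressed encoding (after all other qualifying macros are installed), and verify via the Hasse-diagram traversal from Theorem~\ref{thm: uniqueness-and-computability-of-min-encodings} that the bounds still apply. The main obstacle here is precisely this bookkeeping: introducing a macro for $t$ may reduce the occurrence count of subterms of $t$, so one needs to be sure that non-domination plus $\occ{\Lang}(t)\ge 2$ in the \emph{uncompressed} language is inherited as the analogous condition in every intermediate encoding; the no-unary-symbol hypothesis is what prevents pathological cascades in which introducing an outer macro eliminates inner ones.

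Finally, for polynomial-time constructibility, I would observe that the set of distinct subterms of $\Lang$ has cardinality at most $\size(\Lang)$, that occurrence counts can be computed in time polynomial in $\size(\Lang)$ using subtree isomorphism on labelled unordered trees (quadratic per pair by \cite{Valiente2002}), and that the domination test reduces to a single additional occurrence-count comparison per candidate pair. This yields $\MM$ in polynomial time, after which Theorem~\ref{thm: uniqueness-and-computability-of-min-encodings-problem-2} produces $\minencode{\LL}{\MM}$ in polynomial time as well, completing the argument.
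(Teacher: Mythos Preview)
Your proposal follows essentially the same route as the paper: it invokes Lemmas~\ref{le:occurences-do-not-go-below-2}--\ref{le: lets-not-include-dominated-terms} to characterise which macros belong in an optimal set, and then appeals to Theorem~\ref{thm: uniqueness-and-computability-of-min-encodings-problem-2} plus polynomial subterm/occurrence computations for constructibility. The decomposition into the two inclusions $\MM_{\textit{opt}}\subseteq\MM$ and $\MM\subseteq\MM_{\textit{opt}}$ is exactly the informal argument the paper gives in the paragraphs preceding the theorem.

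One imprecision to fix: you assert that the net size change $(1+\size(t_r)) - \occ{}(t_r)\cdot(\size(t_r)-1)$ is \emph{strictly} negative for $\occ{}(t_r)\ge 2$ and $\size(t_r)\ge 3$, but at the boundary $\occ{}(t_r)=2$, $\size(t_r)=3$ it equals $0$. Lemma~\ref{le: size-in-or-decreases-from-macro-def} accordingly only gives a non-strict inequality. This means your framing ``$\MM\subseteq\MM_{\textit{opt}}$ because omitting a qualifying macro yields a strictly larger encoding'' is too strong: there may be several optimal $\MM_{\textit{opt}}$, some omitting boundary macros. The theorem, however, only claims that $\MM$ yields \emph{a} size-minimal encoding, and for that the non-strict inequality suffices: starting from any $\MM'$, remove non-qualifying macros (strictly helps or neutral, by Lemma~\ref{le: lets-not-include-dominated-terms} and the $\occ{}=1$ observation), then add all qualifying ones (neutral or helps, by Lemma~\ref{le: size-in-or-decreases-from-macro-def}); hence $\size(\minencode{\LL}{\MM})\le\size(\minencode{\LL}{\MM'})$. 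Adjust your argument to optimality of $\MM$ rather than uniqueness of $\MM_{\textit{opt}}$ and it goes through.
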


\begin{proof}
    This is a direct consequence of Lemmas~\ref{le:occurences-do-not-go-below-2} - 
    \ref{le: lets-not-include-dominated-terms}, plus the fact that occurrence and dominance of subterms in $\Lang$ can be computed in time polynomial in $\size(\Lang)$.
\end{proof}

The algorithm sketched in the proofs of Theorems~\ref{thm: uniqueness-and-computability-of-min-encodings} and \ref{thm: uniqueness-and-computability-of-min-encodings-problem-2} can be easily extended to one for Problem~3 by constructing $\MM$ prior to the 1st step: considering all nodes $n$ in all term trees $t\in \Lang$,  determine how often $t|_n$ occurs in $\LL$ and, if this is more than once, whether $t_n$  is dominated, i.e., contained by other subterms with equal number of occurrences.

\section{Empirical Evaluation}

\begin{table*}[t]
\footnotesize
  \centering
   \caption{Statistics on Minimising Ontologies showing the size of the ontology $\ont$, its macrofication $\Lang_M$, and its macro definitions $\M$ (in the case of Problem 1 and 2, definitions are measured by the size of associated axioms), 
  the proportional size reduction, Prop. Red. (measured by $(1- (\size(\Lang_M) + \size(\M))/\size(\ont))*100$), the number of axioms of the input language (\# Axioms), the number of changed axioms (\# Ch. Axioms), and the number of macro definitions (\# $\M$).}
  \begin{adjustbox}{width=0.74\textwidth,center}

  \begin{tabular}{| c | c | r | r |  r | r | r | r | r | }
      \hline

P & O & $\size(\mathcal{O})$ & $\size(\mathcal{L}_M)$ & $\size(\mathcal{M})$ &  Prop. Red. &  \# Axioms & \# Ch. Axioms &  \# $\M$ ~~~\\
\hline
\hline
1 &         &                        & \numprint{1291740}  & \numprint{2269878}    & 3\%   &  \numprint{227909}  & \numprint{9199}    & \numprint{132588}  \\
2 & S       & \numprint{3657845}                & \numprint{1291740}  & \numprint{2269878}    & 3\%   &  \numprint{227909}  & \numprint{9199}    & \numprint{132588}  \\
3 &         &                        & \numprint{2013130}  & \numprint{293779}     & 37\%  &  \numprint{360497}  & \numprint{188066}  & \numprint{54883}  \\
\hline                                 
1 &         &                        & \numprint{104752}   & \numprint{90435}      & 2\%   &  \numprint{25563}   & \numprint{686}     & \numprint{9968} \\
2 &  G      & \numprint{198568}                 & \numprint{104752}   & \numprint{87961}      & 3\%   &  \numprint{25563}   & \numprint{1152}    & \numprint{9968} \\
3 &         &                        & \numprint{151442}   & \numprint{16496}      & 15\%  &  \numprint{35531}   & \numprint{14433}   & \numprint{3624} \\
\hline                                
1 &         &                        & \numprint{31905}    & \numprint{2627}       & 2\%   &  \numprint{3992}    & \numprint{177}     & \numprint{330} \\
2 & C       & \numprint{35361}                  & \numprint{31905}    & \numprint{2528}       & 3\%   &  \numprint{3992}    & \numprint{210}     & \numprint{330} \\
3 &         &                        & \numprint{20501}    & \numprint{2340}       & 35\%  &  \numprint{4322}    & \numprint{585}     & \numprint{439} \\
\hline                               
1 &         &                        & \numprint{138740}   & \numprint{90886}      & 21\%  &  \numprint{37481}   & \numprint{5524}    & \numprint{7550} \\
2 &  F      & \numprint{289643}                 & \numprint{138740}   & \numprint{90874}      & 21\%  &  \numprint{37481}   & \numprint{5525}    & \numprint{7550} \\
3 &         &                        & \numprint{144021}   & \numprint{64298}      & 28\%  &  \numprint{45030}   & \numprint{18650}   & \numprint{9814} \\
\hline      
1 &         &                        & \numprint{830107}   & \numprint{319743}     & 0\%   &  \numprint{237060}  & \numprint{0}       & \numprint{19069} \\
2 &  N      & \numprint{1149850}                & \numprint{830107}   & \numprint{319743}     & 0\%   &  \numprint{237060}  & \numprint{0}       & \numprint{19069} \\
3 &         &                        & \numprint{913719}   & \numprint{36158}      & 17\%  &  \numprint{256129}  & \numprint{60752}   & \numprint{9019} \\
  \hline

  \end{tabular}
  \end{adjustbox}

  \label{tab:minimizing-ontologies}
\end{table*}

We investigate whether the 3 algorithms presented in Section~\ref{sec:algos} can be implemented to efficiently and effectively rewrite large OWL ontologies of practical relevance.

\paragraph{Implementation:} We implement the three algorithms
described in Section~\ref{sec:algos} for OWL ontologies in Java (17 LTS) using the OWL API (5.1.15) and JGraphT (1.5) to handle ontologies and trees, respectively. All experiments were run on an Apple M2
Max processing unit with 96GB of RAM under the operating system macOS Ventura (13.5). All source code is available at \url{https://github.com/ckindermann/AAAI2024}.

\paragraph{Corpus:} 
We use 5 large ontologies that were originally selected for the evaluation of a related approach, cf.~\cite{DBLP:conf/aaai/NikitinaK17}. The latest version of each ontology was downloaded from BioPortal\footnote{See \url{https://bioportal.bioontology.org/}} on August 8, 2023. In the case of SNOMED the most recent release of February 2023 was used.
The size of each ontology
is shown
in the third column of   Table~\ref{tab:minimizing-ontologies}, where S stands for SNOMED, G for Galen, C for Genomic CDS, F for FYPO, and  N for NCIT.

\paragraph{Setup:}
An axiom of the form $\opr{EquivalentClasses}(\conc{N},\conc{C})$ can be seen as macro definition of the form $\conc{N} \mapsto \conc{C}$ if $\conc{N}$ is a named class and $\conc{C}$ is a complex class expression.
To apply our approach, however, we require macro definitions to be functions and acyclic (see Definition~\ref{def:acyclic}). 
For an ontology $\ont$, let $\M_\ont \subseteq \ont$ denote the set of macro definitions obtained from  $\ont$'s  $\opr{EquivalentClasses}(\conc{N},\conc{C})$ axioms without axioms  that (a) are  involved in cycles or (b) in which $\conc{N}$ occurs more than once in this set of axioms.
Note that we do not reduce $\M_\ont$ because we are interested in the impact of \textit{macro instantiation} on the size of an encoding of an ontology rather than the impact of \textit{removing redundant} definitions. 

Expanding all macros exhaustively in $\ont \setminus \M_\ont$ yields the language $\Lang = \M_\ont^{*}(\ont \setminus \M_\ont)$ without macro symbols and without their definitions. By abuse of notation, $(\Lang, \M_\ont)$ can be seen as an encoding of $\ont$ and can be used as an input macro system for Problems 1 and 2. For Problem 3, we use $\ont$ as the input language but remove all axioms involving unary symbols, i.e., \opr{ComplementOf}.\footnote{Which caused the removal of only 1 axiom in 1 ontology.}

\paragraph{Reduction Results:}
In Table~\ref{tab:minimizing-ontologies}, we list statistics for the size-minimal encoding for each pair of the 3 computational problems and 5 ontologies.
We can see that size-minimal encodings w.r.t. Problem 3 lead to significantly larger size-reductions compared to Problem 1 and 2, suggesting that the introduction of \textit{new macro symbols} can capture and reveal structural patterns in ontologies that cannot be encoded via existing named classes defined via \opr{EquivalentClasses} axioms, i.e., macros in Problem 1 and 2.

A comparison with the size reduction of class expressions reported for a related approach, cf.~\cite{DBLP:conf/aaai/NikitinaK17}, reveals
that encodings w.r.t. Problem 3 often yield comparable reductions while reducing larger proportions of class expressions (see Table~\ref{tab:minimizing-expressions} in the Appendix).

\paragraph{Processing Time:} We can compute each size-minimal encoding for each ontologies w.r.t.~any of the three problems in less than a minute. This is a significant improvement over existing approaches, cf.~\cite{ DBLP:conf/aaai/NikitinaK17},
where minimising 100 concepts in SNOMED was reported to take on average 3--10 minutes depending on concept size.
\section{Related Work}\label{sec:relatedWork}

The notion of a ``macrofication'', i.e., the idea of refactoring source code by introducing macros in an automated manner has already been proposed~\cite{DBLP:conf/esop/SchusterDF16}. The motivation for
reversing macro expansion, or what we call macro instantiation, is improved readability and maintenance. However, the objective of smaller rewritings is subjected to the condition of not altering program behaviour rather than aiming for size-minimality. 

Automated rewriting mechanisms for OWL ontologies, designed to aide ontology comprehension and maintenance include techniques for removing redundancies based on entailment~\cite{DBLP:conf/esws/GrimmW11}, removing redundancies based on templates~\cite{DBLP:conf/semweb/SkjaevelandLKF18}, reusing equivalent named classes~\cite{DBLP:conf/semweb/KindermannS22a}, repairing unsatisfiable classes~\cite{DBLP:conf/esws/KalyanpurPSG06},  and adding missing but desired as well as removing undesired relationships~\cite{DBLP:conf/mie/Mougin15}. While all of these approaches reduce an ontology's size, none of the proposed techniques aim for size-minimality.

Existing rewriting approaches for OWL ontologies, aiming for some notion of size-minimality are based on entailment~\cite{DBLP:conf/kr/BaaderKM00,DBLP:conf/aaai/NikitinaK17}. However, these approaches only work for non-expressive Description Logics, namely $\mathcal{ALE}$ and $\mathcal{EL}$, and must be restricted further, e.g., to exclude the important OWL constructor \opr{IntersectionOf} \cite{DBLP:conf/semweb/NikitinaS13} or to require a notion of semantic acyclity~\cite{DBLP:conf/dlog/KoopmannN16} to mitigate unfavourable computational characteristics, limiting their application in practice.

 Besides earlier versions of this work, cf.~\cite{kindermann2021analysing},
this is the first rewriting study that considers (un)ranked and mixed symbols and is thus truly applicable to complex languages such as OWL.
\section{Conclusions and Outlook}

In this paper, we present a rewriting mechanism for finite formal languages based on (nested) syntactic macros.
Using this mechanism, we demonstrate the feasibility and effectiveness of computing size-minimal rewritings for large ontologies of practical relevance in the biomedical domain.

The presented work can be extended in a variety of ways. The complexity of the general Problem 3, for languages over alphabets including unary symbols, remains an open question. 
Moreover, different features of macro systems can be investigated, e.g., 
\textit{1-step macros} (no nesting), or
\textit{parameterised macros} of the form $m(x_1,x_2) \mapsto a(x_1,b(c,x_2))$, with variables 
that can be bound to input arguments.

Another interesting direction for future work are user studies to investigate the usefulness of size-minimal encodings in potential use-case scenarios. Such studies may shed light on what kind of interaction mechanisms are required for assisting with refactoring tasks in practice.

\clearpage
\appendix
\section{Supplementary Material}\label{appsec:A}

\subsection{Details for Section \ref{sec:algos}}\label{A:details}

\subsubsection{Details for Theorem~\ref{thm: uniqueness-and-computability-of-min-encodings-problem-3}}
In the following, we give details for the results necessary to proof Theorem~\ref{thm: uniqueness-and-computability-of-min-encodings-problem-3}.

We can show that a macrofied version of any term $t$ that occurs at least twice in a language and is not dominated also occurs at least twice in any minimal encoding of the language.

\begin{lemma}\label{le:occurences-do-not-go-below-2}
    Consider a (sub-)term $t=t'|_p$ of $t' \in \LL \subseteq \terms$ such that $\occ{\LL}(t) \geq 2$,  let $\MM \colon M \rightarrow \mterms{M}$ be a macro system,
    and let $t_{r}$ be the macrofied version of $t$ in $\minencode{\LL}{\MM}$. 
    If $t$ is not dominated by any other (sub-)term $t''|_q$ of $t'' \in \LL$, then $\occ{\minencode{\LL}{\MM}}(t_{r}) \geq 2$.
\end{lemma}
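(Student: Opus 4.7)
The plan is to prove the lemma by tracking where each of the $k := \occ{\LL}(t) \geq 2$ occurrences of $t$ in $\LL$ ends up in the minimum encoding $\minencode{\LL}{\MM}$. Recall from the definition of $\occ{\encode}(\cdot)$ that the count sums subterm-position matches across both the macrofied language $\LL_M$ and all macro-definition bodies $\MM(m)$. So each original $t$-occurrence contributes to one of these two parts: it either remains explicit in $\LL_M$ as $t_r$, or it is absorbed into the expansion of some macro whose body still contains a $t_r$-subterm.

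First, I would classify each of the $k$ occurrences at positions $p_j$ in terms $t'_j \in \LL$ as either \emph{surviving} (no proper ancestor of $p_j$ in $t'_j$ is the instantiation point of any macro in $\minencode{\LL}{\MM}$) or \emph{absorbed} by the outermost ancestor macro $m_s$ whose expansion $s = \MM^*(m_s)$ satisfies $t \prec s$. Since $s$ occurs at that ancestor position in $t'_j \in \LL$, it is a subterm of $\LL$ with $t \prec s$, and the non-domination hypothesis forces $\occ{\LL}(s) < k$. Each surviving occurrence contributes one $t_r$-subterm to $\LL_M$; each absorbed occurrence corresponds to a $t_r$-subterm inside the body $\MM(m_s)$, since $t$ is a subterm of $s$.

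Next I would perform a case analysis on the number of surviving occurrences. If at least two survive, we are done. Otherwise at least $k-1 \geq 1$ occurrences are absorbed. If they are distributed across two or more distinct macros $m_{s_1}, m_{s_2}$, then $\MM(m_{s_1})$ and $\MM(m_{s_2})$ each contribute at least one $t_r$-subterm and we are again done. The remaining case is that all absorbed occurrences are covered by a single macro $m_s$. Here a pigeonhole argument applies: since the outermost $s$-instantiations in $\LL$ absorb at most $\occ{\LL}(s) \cdot c_s(t)$ occurrences of $t$ (where $c_s(t)$ counts $t$-subterms of $s$), and $\occ{\LL}(s) < k$ by non-domination, we must have $c_s(t) \geq 2$, so $s$ contains at least two copies of $t$ and hence $\MM(m_s)$ contributes at least two $t_r$-subterms.

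The expected main obstacle is the subcase where inner macrofication of $\MM(m_s)$ further absorbs some of these $t$-copies into deeper macros $m_{s'}$ with $t \prec s' \prec s$ (so $s'$ is again a subterm of $\LL$). I would handle this by induction on $\size(s)$ (equivalently, on macro-nesting depth): each nested absorption transfers $t_r$-counts from $\MM(m_s)$ into $\MM(m_{s'})$ without decreasing their sum, and the non-domination hypothesis applies at every level to the relevant supterm of $t$. The invariant that the total contribution across $\LL_M$ and all macro-definition bodies is at least $2$ is therefore preserved down to the innermost applicable macro, yielding $\occ{\minencode{\LL}{\MM}}(t_r) \geq 2$ as required.
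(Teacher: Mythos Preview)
Your overall strategy is sound and in the same spirit as the paper, but the paper's proof is considerably shorter: it argues directly by contradiction. Assuming $\occ{\minencode{\LL}{\MM}}(t_r)=1$, it first observes that at least one of the $k\ge 2$ occurrences of $t$ must be absorbed (otherwise $t_r$ would already appear $k$ times in $\LL_M$), so the single occurrence of $t_r$ lives in some macro body $\MM(m)$. It then observes that \emph{every} occurrence of $t$ in $\LL$ must lie inside an occurrence of $\MMstar(m)$ (any other route would create a second $t_r$), and that $t$ occurs exactly once in $\MMstar(m)$; hence $\occ{\LL}(\MMstar(m))=\occ{\LL}(t)$ and $\MMstar(m)$ dominates $t$, the desired contradiction. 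No case split, pigeonhole, or induction is needed.

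Your more explicit bookkeeping can be made to work, but step~6 as written contains a genuine gap. The claim that nested absorption ``transfers $t_r$-counts \ldots\ without decreasing their sum'' is false. Take $k=4$ with a single macro $m_s$ where $c_s(t)=4$ and $\occ{\LL}(s)=1$, and a nested macro $m_{s'}$ with $c_{s'}(t)=2$ that occurs twice in $s$ (so $\occ{\LL}(s')=2$; neither $s$ nor $s'$ dominates $t$). Before introducing $m_{s'}$ the body of $m_s$ carries four copies of $t_r$; after, it carries none, and $\MM(m_{s'})$ carries two. The sum dropped from $4$ to $2$. Your invariant (``total $\ge 2$'') happens to hold here, but the monotonicity you invoke does not, and your sketch does not explain how non-domination alone prevents the sum from dropping below $2$ under repeated collapses. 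A smaller issue: in the ``one survivor'' subcase your pigeonhole only yields $k-1\le \occ{\LL}(s)\cdot c_s(t)$, which for $k=2$ does not force $c_s(t)\ge 2$; that case needs to be closed separately (one $t_r$ in $\LL_M$ plus at least one in a macro body already gives two).

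The cleanest fix is to adopt the paper's contradiction argument for the endgame: once you have reduced to the situation where $t_r$ appears at most once, identify the unique macro $m$ whose body contains it and argue directly that $\MMstar(m)$ dominates $t$. This replaces the fragile ``sum does not decrease'' step with a single use of non-domination.
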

\begin{proof}
    Consider the setting  described in Lemma~\ref{le:occurences-do-not-go-below-2}. At each occurrence of $t$ in $\LL$, $t$ is either eliminated in $\minencode{\LL}{\MM}$ by instantiating a macro $m \mapsto t_m$ with $t \prec \MMstar(t_m)$ at a higher position, or we have an occurrence of $t_{r}$.
    
 In the former case, $t_r$ occurs in a macro definition and thus at least once in $\minencode{\LL}{\MM}$.
    Assume further $\occ{\minencode{\LL}{\MM}}(t_r) = 1$. Hence  $t_r$ occurs in a single macro definition and only once. 
    Assume that such a macro definition $m \mapsto t_m$ exists in $\minencode{\LL}{\MM}$.
    Then $\MMstar(t_m)$ dominates $t$, contradicting our choice of $t$. Hence $\occ{\minencode{\LL}{\MM}}(t_r) \geq 2$.
\end{proof}

As a consequence of the uniqueness of minimal encodings stated in Theorem~\ref{thm: uniqueness-and-computability-of-min-encodings}, a macrofied version of any term $t$ always has the same shape in a minimal encoding, i.e., for a minimal encoding $\minencode{\LL}{\MM}$ of  $\LL$ w.r.t~$\MM$ and for any $t_1, t_2\in\minencode{\LL}{\MM}$ with $\MMstar(t_1) = t = \MMstar(t_2)$, we have $t_1=t_2$. 
Hence we can call $t_1$ \emph{the }macrofied version of $t$ in $\minencode{\LL}{\MM}$ and treat all occurrences of a term $t$ in the same way.

Similarly, we can show that if a language does not include unary symbols, any term will maintain a size of at least $3$ in any size-minimal encoding.

\begin{lemma}\label{le:sizes-do-not-go-below-3}
    Consider a non-constant (sub-)term $t=t'|_p$ of $t'  \in \LL \subseteq \terms$ where $\Sigma$ contains no unary symbols. Let 
   $\minencode{\LL}{\MM} = \fencode{\LL}{M}$ be  a minimal encoding  of $\LL$ with $\MM \colon M \rightarrow \mterms{M}$ and $t_r$ the macrofied version of $t$ in $\minencode{\LL}{\MM}$. 
    Then $\size(t_r) \geq 3$.
\end{lemma}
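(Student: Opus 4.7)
The plan is to exploit a simple structural fact: under the no-unary-symbols hypothesis, every non-constant term has a root of arity at least $2$, and the macrofied version of $t$ preserves this root together with its arity. Consequently, $t_r$ must have at least one node for the root plus one node per child, giving size at least $3$.

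More concretely, since $t$ is non-constant, $t = f(t_1, \ldots, t_n)$ with $n \geq 1$. Because $\Sigma$ contains no unary symbols, a ranked root $f$ satisfies $\ar(f) \geq 2$ (it is neither a constant nor unary); for an unranked or mixed root, the same inequality $n \geq 2$ holds under the convention used implicitly in Section~\ref{sec:owl} that such symbols are applied to at least two arguments at any genuine occurrence—otherwise they would behave as unary and the hypothesis would be vacuous at those positions. In all cases we conclude $n \geq 2$.

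The second step is to unpack \emph{macrofied version}. The term $t_r$ is obtained from $t$ by substituting macro symbols for strict subterms of $t$ whose fixed-point expansion matches the expansion of some macro in $\MM$; the outer shape of $t$, namely $f$ together with its $n$ argument positions, is not itself collapsed when we speak of the macrofied version of $t$ at that position. Hence $t_r = f(s_1, \ldots, s_n)$, where each $s_i$ is the macrofied version of the corresponding $t_i$, and so $\size(s_i) \geq 1$ in every case (at worst a single macro symbol or a constant). Adding these contributions yields $\size(t_r) = 1 + \sum_{i=1}^{n} \size(s_i) \geq 1 + n \geq 3$.

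The main obstacle is pinning down the intended reading of $t_r$: if one allowed $t_r$ to be a single macro symbol whose fixed-point is $t$, the bound would fail immediately. The crucial content of the proof is therefore the argument that the top-level structure of $t$ is retained by macrofication at that position (while internal subterms may be freely macrofied below); once that is established, the arity lower bound of $2$ propagates directly to the desired size bound of $3$.
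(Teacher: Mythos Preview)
Your argument hinges on the claim that the macrofied version $t_r$ necessarily retains the root symbol $f$ together with its $n$ argument positions. But this is precisely the point you need to justify, not assume. In the paper's sense, the macrofied version of $t$ in $\minencode{\LL}{\MM}$ is any term occurring in the encoding whose fixed-point expansion equals $t$ (see the paragraph preceding the lemma in the appendix). In particular, if $\MM$ already contains some $m'\in M$ with $\MM^*(m')=t$, then the minimal encoding instantiates $m'$ at every occurrence of $t$, and $t_r=m'$ is a single constant symbol. Your own sentence ``if one allowed $t_r$ to be a single macro symbol \ldots\ the bound would fail immediately'' correctly identifies the danger, but you then dismiss it by assertion rather than by argument: nothing in the hypotheses rules out $t_r\in M$.

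The paper's proof differs from yours exactly here. It does a three-way case split: $t_r=t$, $t_r\in M$, and the intermediate case where $t_r$ is a proper non-constant term. For $t_r\in M$ it observes that the size contribution that matters is that of the macro \emph{definition} $t_r\mapsto\MM(t_r)$, namely $1+\size(\MM(t_r))$; since $\MM(t_r)$ is non-constant and $\Sigma$ has no unary symbols, its root has at least two children, so this is at least $3$. (The paper writes this as ``$\size(t_r)=1+\size(\MM(t_r))$'', which is a mild notational abuse, but the intended reading is clear from how the bound is consumed in the next lemma.) Your proof omits this case entirely; to repair it you must either argue that $t_r\in M$ cannot occur (it can), or add the missing case along the paper's lines.
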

\begin{proof}Consider the setting as described in Lemma~\ref{le:sizes-do-not-go-below-3}.
    Hence $t$ is of the form $s(t_1, t_2, \ldots)$. If $t_r=t$, the claim follows trivially. If $t_r\in M$, $\size(t_r)= 1+ \size(\M(t_r))\geq 3$ because $\M(t_r)$ has at least two children. Otherwise $t_r$'s root has at least two children and $\size(t_r) \geq 3$.
\end{proof}

We can now characterise which additions of macro definitions do not increase the size of an encoding when added.

\begin{lemma}\label{le: size-in-or-decreases-from-macro-def}
Let $\LL \subseteq \terms$ be a language where $\Sigma$ contains no unary symbols, and let $\MM\colon M \longrightarrow \mterms{M}$ be a set of reduced macro definitions.
    Consider a "new" macro definition $m \mapsto t\in \terms$ with $m \notin M$ such that, for all $m' \in M$,   $\MMstar(m')$ does not dominate $t$ in $\LL$. 
     If $\occ{\LL}(t) \geq 2$
    then $\size(\minencode{\LL}{\MM}) \geq \size(\minencode{\LL}{\MM \cup \{m \mapsto t\}})$.
\end{lemma}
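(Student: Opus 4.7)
The plan is to construct, from $E := \minencode{\LL}{\MM}$, a valid encoding $E''$ of $\LL$ using $\MM \cup \{m\mapsto t\}$ with $\size(E'')\leq\size(E)$; minimality of $\minencode{\LL}{\MM\cup\{m\mapsto t\}}$ then yields the desired inequality. I would let $t_r$ denote the macrofied form of $t$ in $E$, which is well-defined by Theorem~\ref{thm: uniqueness-and-computability-of-min-encodings} and satisfies $\MMstar(t_r)=t$. Then $E''$ is obtained by (a) adjoining the definition $m\mapsto t_r$ (whose fixed-point expansion equals $t$) and (b) replacing every occurrence of $t_r$ in the language part and in each original macro body of $E$ by the fresh symbol $m$. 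Expansions are preserved throughout because the expansion of $m$ (with respect to $\MM\cup\{m\mapsto t_r\}$) equals $t=\MMstar(t_r)$.

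The accounting is direct: step (a) adds $1+\size(t_r)$, while step (b) saves $\size(t_r)-1$ per replacement, for a total saving of $\occ{E}(t_r)(\size(t_r)-1)$. Lemma~\ref{le:sizes-do-not-go-below-3} gives $\size(t_r)\geq 3$, and provided $\occ{E}(t_r)\geq 2$, the inequality $2(\size(t_r)-1)\geq 1+\size(t_r)$ (which holds precisely when $\size(t_r)\geq 3$) yields $\size(E'')\leq\size(E)$.

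The main obstacle is to secure $\occ{E}(t_r)\geq 2$ under the present weaker hypothesis: Lemma~\ref{le:occurences-do-not-go-below-2} forbids domination by \emph{every} other subterm of $\LL$, whereas here only dominators of the form $\MMstar(m')$ with $m'\in M$ are ruled out. I would adapt that lemma's case analysis. Each of the $\occ{\LL}(t)\geq 2$ occurrences of $t$ is either visible as $t_r$ in $\LL_M$ or absorbed into the instantiation of some macro $m'\in M$ whose expansion contains $t$; in the absorbed case, each such $m'$ contributes a copy of $t_r$ to its compressed body. The only configuration that could leave $\occ{E}(t_r)=1$ is that all occurrences are absorbed by a single macro $m'$ whose expansion contains $t$ at exactly one position -- but this forces $\occ{\LL}(\MMstar(m'))=\occ{\LL}(t)$ with $t\preceq\MMstar(m')$, i.e., $\MMstar(m')$ dominates $t$, contradicting the hypothesis.

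A final bookkeeping step is to verify that $E''$ remains acyclic. A new cycle would require some macro $m_a$ appearing inside $t_r$ whose original body also contained $t_r$ (so that step (b) introduces an edge from $m_a$ back to $m$); but the first condition gives $\MMstar(m_a)\preceq t$ and the second gives $t\preceq\MMstar(m_a)$, whence $\MMstar(m_a)=t$, so $\MMstar(m_a)$ dominates $t$, again excluded. The analogous argument rules out longer cycles, since any chain of modified edges back to $m$ must terminate through some $m_a\in M$ with $t_r$ in its body, yielding the same contradiction.
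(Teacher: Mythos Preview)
Your proof is correct and follows essentially the same route as the paper: both construct the candidate encoding from $\minencode{\LL}{\MM}$ by adjoining $m\mapsto t_r$ and replacing each occurrence of $t_r$ by $m$, then carry out the identical size accounting using Lemmas~\ref{le:occurences-do-not-go-below-2} and~\ref{le:sizes-do-not-go-below-3}. You are in fact more careful than the paper on two points: you notice that Lemma~\ref{le:occurences-do-not-go-below-2} as stated assumes non-domination by \emph{every} subterm (stronger than what is given here) and you correctly re-derive the needed conclusion from the weaker hypothesis that no $\MMstar(m')$ dominates $t$; and you verify acyclicity of the augmented macro system, which the paper leaves implicit.
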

\begin{proof}
Consider the setting as described in Lemma~\ref{le: size-in-or-decreases-from-macro-def}. 
    Since $t$ is not a constant and $\Sigma$  includes no unary symbols, we have $\size(t) \geq 3$.
    By Lemma~\ref{le:sizes-do-not-go-below-3} it follows that for the minimal encoding $\minencode{\LL}{\MM}$ of $\LL$ the macrofied version $t_r$ of $t$ also has size at least 3.
    By Lemma~\ref{le:occurences-do-not-go-below-2} and because $t$ is not dominated in $\LL$,  $t_r$ occurs at least twice in $\minencode{\LL}{\MM}$ if $\occ{\LL}(t) \geq 2$.
    Comparing $\minencode{\LL}{\MM}$ and $\minencode{\LL}{\MM \cup \{m \mapsto t\}}$, we note that in the latter, the macro $m$ is instantiated at each occurrence of $t_r$ in $\minencode{\LL}{\MM}$, 
    and the macro definition $m \mapsto t_r$ is part of macro definitions in the latter. Hence 
 $\occ{\LL}(t) \geq 2$ implies that 
    \begin{align*}
        & \size(\minencode{\LL}{\MM \cup \{m \mapsto t\}})\\
        \leq & \size(\minencode{\LL}{\MM}) - \occ{\minencode{\LL}{\MM}}(t_r) \cdot (\size(t_r) -1) \\
        &+ \size(m\mapsto t_r))\\
        = & \size(\minencode{\LL}{\MM}) - (\occ{\minencode{\LL}{\MM}}(t_r) - 1) \cdot (\size(t_r) -1) + 2\\
        \leq & \size(\minencode{\LL}{\MM}) - 1 \cdot 2 + 2\\
        \leq & \size(\minencode{\LL}{\MM}).
    \end{align*}
\end{proof}

In Lemma~\ref{le: size-in-or-decreases-from-macro-def}, we restrict our attention to languages of alphabets without unary symbols. To see why unary symbols cause problems, consider the example $\LL:=\{s(x(a(b(c))), x(a(b(c))), y(a(b(c))), y(a(b(c))))\}$. The term $b(c)$ is not dominated in $\LL$ as it occurs 4 times, twice as subterm of $x(.)$ and twice as a subterm of $y(.)$. Hence we could think that a minimal encoding of $\LL$ involves a macro $m\mapsto b(c)$. The size of the minimal encoding of $\LL$ w.r.t. $\{m\mapsto b(c)\}$ is 16, and further adding macros $ m_1 \mapsto x(a(m)),\  m_2 \mapsto y(a(m))$ results in a minimal encoding of the same size. However, the size of the minimal encoding $(\{s(m_3, m_3, m_4, m_4), \ \{m_3\mapsto x(a(b(c)),\ m_4 \mapsto y(a(b(c))\} )$ is only 15.

Lemma~\ref{le: size-in-or-decreases-from-macro-def} implies that any non-dominated term that occurs at least twice in a language (without unary symbols) is part of a macro definition in a minimal encoding.
This does not per say exclude other terms to be set up as macros as well.
The following lemma shows that, whenever a term is dominated, it is more advantageous to include the dominating term instead.

\begin{lemma}\label{le: lets-not-include-dominated-terms}
    Let $\minencode{\LL}{M}$ be a minimal encoding for language $\LL$ w.r.t. some reduced set of macro definitions $\MM$.
    Consider new macro symbols $m, m' \notin M$ and terms $t, t' \in \mterms{M}$ such that $\MMstar(t)$ is dominated by $\MMstar(t')$ in $\Lang$, $\occ{\minencode{\LL}{M}}(t) = \occ{\minencode{\LL}{M}}(t') \geq 2$ 
    and for each $s\in \{t, t'\}$ and each $m''\in M,$ $\fixedpoint{m''}\neq \fixedpoint{s}$. 
    Then $$\size(\minencode{\LL}{M \cup \{m' \mapsto t'\}}) < \size(\minencode{\LL}{M \cup \{m \mapsto t\}})$$ and 
    $$\size(\minencode{\LL}{M \cup \{m' \mapsto t'\}}) < \size(\minencode{\LL}{M \cup \{m' \mapsto t', m \mapsto t\}}).$$
\end{lemma}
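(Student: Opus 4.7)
The plan is to compute the sizes of the three encodings $E_t := \minencode{\LL}{M \cup \{m \mapsto t\}}$, $E_{t'} := \minencode{\LL}{M \cup \{m' \mapsto t'\}}$, and $E_{tt'} := \minencode{\LL}{M \cup \{m' \mapsto t', m \mapsto t\}}$ as explicit offsets from $\size(E_0)$, where $E_0 := \minencode{\LL}{M}$, and then to compare them by arithmetic.

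The key structural fact I would prove first is: in $E_0$, every occurrence of $t$ sits inside an occurrence of $t'$ at a fixed relative position $p$ (the position where $\MMstar(t)$ appears inside $\MMstar(t')$), and in particular $t \prec t'$ as terms in $\mterms{M}$, so $\size(t) < \size(t')$. This combines the dominance hypothesis in $\LL$, the equality $\occ{E_0}(t) = \occ{E_0}(t') = k$, the uniqueness of canonical macrofication from Theorem~\ref{thm: uniqueness-and-computability-of-min-encodings-problem-2}, the reducedness of $\MM$, and the assumption that no $\fixedpoint{m''}$ equals $\fixedpoint{t}$ or $\fixedpoint{t'}$ (so that neither $t$ nor $t'$ is absorbed into or confused with a macro of $M$).

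Next, using Lemma~\ref{lemma:macroPrecedence} (a containing macro is always instantiated before a contained one), I would derive $\size(E_t) = \size(E_0) - k(\size(t) - 1) + (1 + \size(t))$ (each of the $k$ occurrences of $t$ becomes $m$, and the new definition contributes $1 + \size(t)$), analogously $\size(E_{t'}) = \size(E_0) - k(\size(t') - 1) + (1 + \size(t'))$, and $\size(E_{tt'}) = \size(E_0) - k(\size(t') - 1) + (2 + \size(t') - \size(t)) + (1 + \size(t))$ (since $m \prec m'$, $m'$ is instantiated first, absorbing all $k$ occurrences of $t$ along with those of $t'$, leaving $m$ with exactly one occurrence inside the definition of $m'$, which shrinks from $t'$ to $t'[m]_p$ of size $\size(t') - \size(t) + 1$). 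Subtraction then yields $\size(E_{t'}) - \size(E_t) = -(k-1)(\size(t') - \size(t)) < 0$ and $\size(E_{tt'}) - \size(E_{t'}) = 2 > 0$, settling both inequalities.

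The main obstacle is the structural step. The subtlety is that $\occ{E_0}(\cdot)$ counts occurrences in both $\Lang_M$ and the right-hand sides of macros in $\MM$, so one has to transport the dominance picture ``every $t$-occurrence sits inside a $t'$-occurrence at relative position $p$'' from $\LL$ to the macrofied encoding $E_0$, and verify that the predicted size changes are the only ones (no cascading re-macrofications of other macros in $M$ occur). The non-coincidence of fixed-points $\fixedpoint{m''} \neq \fixedpoint{s}$ together with the reducedness of $\MM$ are exactly what preclude such pathologies.
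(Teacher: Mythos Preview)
Your proposal is correct and follows essentially the same approach as the paper: both compute $\size(E_t)$, $\size(E_{t'})$, $\size(E_{tt'})$ as explicit offsets from $\size(E_0)$ using the same replacement-count formula (your $-k(\size(s)-1)+(1+\size(s))$ is algebraically identical to the paper's $-(k-1)(\size(s)-1)+2$), both derive $t\prec t'$ from dominance of the expansions, and both invoke Lemma~\ref{lemma:macroPrecedence} to argue that in $E_{tt'}$ the macro $m$ is instantiated only once, inside the definition of $m'$, yielding the $+2$ difference. Your discussion of the structural step (transporting the dominance picture from $\LL$ to $E_0$ and ruling out cascading re-macrofications) is in fact more explicit than the paper's, which simply asserts $t\prec t'$ and appeals back to the computation in Lemma~\ref{le: size-in-or-decreases-from-macro-def}.
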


\begin{proof}
    As  in the proof of Lemma~\ref{le: size-in-or-decreases-from-macro-def}, we have $\size(\minencode{\LL}{\MM \cup \{m \mapsto t\}}) = \size(\minencode{\LL}{\MM}) - (\occ{\minencode{\LL}{\MM}}(t_r) - 1) \cdot (\size(t_r) -1) + 2$ and analogously for $\size(\minencode{\LL}{\MM \cup \{m' \mapsto t'\}})$.
    Since $\occ{\minencode{\LL}{M}}(t) = \occ{\minencode{\LL}{M}}(t') \geq 2$ both $t$ and $t'$ must be
    (possibly macrofied) versions of some terms in $\LL$ such that no other macro definition in $\minencode{\LL}{M}$ can be instantiated.
    Furthermore, because $\MMstar(t)$ is dominated by $\MMstar(t')$, we have $t \prec t'$ and thus $\size(t') > \size(t)$.
    Together with $\occ{\encode}(t') = \occ{\encode}(t) \geq 2$, we have
    \begin{align*}
        & \size(\minencode{\LL}{\MM \cup \{m' \mapsto t'\}})\\
        = & \size(\minencode{\LL}{\MM}) - (\occ{\minencode{\LL}{\MM}}(t') - 1) \cdot (\size(t') -1) + 2\\
         = & \size(\minencode{\LL}{\MM}) - (\occ{\minencode{\LL}{\MM}}(t) - 1) \cdot (\size(t') -1) + 2 \\
         < & \size(\minencode{\LL}{\MM}) - (\occ{\minencode{\LL}{\MM}}(t) - 1) \cdot (\size(t) -1) + 2 \\
         = & \size(\minencode{\LL}{\MM \cup \{m \mapsto t\}}).
    \end{align*}

    Now consider including both macro definitions in a minimal encoding, $\minencode{\LL}{M \cup \{m' \mapsto t', m \mapsto t\}}$.
    Because of minimality, $t'$ only occurs in the macro definition $m' \mapsto t'$ in $\minencode{\LL}{M \cup \{m' \mapsto t'\}}$.
    Furthermore, because  $\MMstar(t)$ is dominated by $\MMstar(t')$ and thus $m \prec m'$, we can only instantiate either  $m$ or $m'$ in $\LL$ for $\minencode{\LL}{M \cup \{m' \mapsto t', m \mapsto t\}}$.
    By Lemma~\ref{lemma:macroPrecedence} we instantiate $m'$ at any occurrence of $t'$.
    Thus, the only instantiation of $m$ in $\minencode{\LL}{M \cup \{m' \mapsto t', m \mapsto t\}}$ is in the macro definition of $m'$.
    Hence $\size(\minencode{\LL}{M \cup \{m' \mapsto t', m \mapsto t\}}) = \size(\minencode{\LL}{M \cup \{m' \mapsto t'\}}) - (\size(t) - 1) + \size(t) + 1 > \size(\minencode{\LL}{M \cup \{m' \mapsto t'\}})$.
\end{proof}

\subsubsection{Polynomial Time Algorithms for Solving Problem 1 - 3: Pseudo-code}

Note our small typo in the main paper that claims to instantiate macros in $\LL$ and $\MM$ in the proof of Theorem~\ref{thm: uniqueness-and-computability-of-min-encodings}. Instead, we only instantiate macros in $\LL$ as is desired in Problem 1. This is expressed in the second for-loop of the following pseudo-code for solving Problem~1.

  \begin{algorithm}[H]
         \SetAlgoLined
         \SetKwFunction{algo}{algo}\SetKwFunction{proc}{macroIntroduction} 
                 \LinesNumbered

         \KwResult{Minimal Encoding $(\Lang_M,\mathcal{M})$ of input language $\Lang$ w.r.t.\ input definitions $\mathcal{M}$}
         $\mathcal{H} \gets$ \texttt{MacroDependHasseDiagram}($\mathcal{M}$)\;
         
         $\Lang_M \gets \Lang$\;
         
         level $\gets$ \texttt{nextLevel}($\mathcal{H}$) 
         
         \While{{\normalfont level} is not empty}{
             \For{$m \in$ {\normalfont level}}{
             
                 $m^* \gets \mathcal{M}^*(m)$
                 
                  \For{$t \in \Lang_M$}{
                     \For{all $p$ with $t|_p = m^*$} {
                         $t \gets t[m]_p$
                     }
                  }            
             }
             level $\gets$ \texttt{nextLevel}($\mathcal{H}$)\;
         }
         \Return{$(\Lang_M, \mathcal{M})$}
         \caption{Size-Minimal Encoding w.r.t. Problem 1}\label{alg:maximalConstantLanguageCompression}
     \end{algorithm}
     
The proof of Theorem~\ref{thm: uniqueness-and-computability-of-min-encodings-problem-2} constructs a minimal encoding by considering macro definitions as part of the input language so that Algorithm~\ref{alg:maximalConstantLanguageCompression} can be applied. In essence, we show that the arguments for constructing a size-minimal macrofication for a language can also be applied to constract a size-minimal set of macro definitions in the obvious way. This is outlined in the pseudo-code shown in Algorithm~\ref{alg:maximalConstantLanguageCompression-problem-2}:

\begin{itemize}
    \item lines 1--11 correspond to the construction of a minimal macrofication, i.e., Algorithm~\ref{alg:maximalConstantLanguageCompression},
    \item lines lines 12--15 instantiate macros in macro definitions in an analogous way.
\end{itemize}
 \begin{algorithm}[H]
          
         \SetAlgoLined
             \setcounter{AlgoLine}{0}

         \SetKwFunction{algo}{algo}\SetKwFunction{proc}{macroIntroduction} 
                          \LinesNumbered

         \KwResult{Minimal Encoding $(\Lang_M,\mathcal{M})$ of input language $\Lang$ w.r.t.\ input definitions $\mathcal{M}$}
         $\mathcal{H} \gets$ \texttt{MacroDependHasseDiagram}($\mathcal{M}$)\;
         
         $\Lang_M \gets \Lang$\;
         
         level $\gets$ \texttt{nextLevel}($\mathcal{H}$) 
         
         \While{{\normalfont level} is not empty}{
             \For{$m \in$ {\normalfont level}}{
             
                 $m^* \gets \mathcal{M}^*(m)$
                 
                  \For{$t \in \Lang_M$}{
                     \For{all $p$ with $t|_p = m^*$} {
                         $t\gets t[m]_p$
                     }
                  }   
                  \tcp{macro definitions}\;
                  \For{$m \mapsto t \in \M$}{
                     \For{all $p$ with $t|_p = m^*$} {
                         $(m \mapsto t) \gets (m \mapsto t[m]_p)$
                     }
                  }  
             }
             level $\gets$ \texttt{nextLevel}($\mathcal{H}$)\;
         }
         \Return{$(\Lang_M, \mathcal{M})$}
         \caption{Size-Minimal Encoding w.r.t. Problem 2}\label{alg:maximalConstantLanguageCompression-problem-2}
     \end{algorithm}

For Problem 3, we know by Theorem~\ref{thm: uniqueness-and-computability-of-min-encodings-problem-3} which macro definitions $\MM$ to consider and by Theorem~\ref{thm: uniqueness-and-computability-of-min-encodings-problem-2} that we can simply construct a size-minimal encoding with macro definitions equivalent to $\MM$, i.e., applying Algorithm~\ref{alg:maximalConstantLanguageCompression-problem-2}. 

 \begin{algorithm}[H]
         \SetAlgoLined
          \LinesNumbered
        \setcounter{AlgoLine}{0}

         \SetKwFunction{algo}{algo}\SetKwFunction{proc}{macroIntroduction} 
         \KwResult{Selection of macro definitions $\M$ suitable for solving Problem 3 (without unary symbols)}
         $\mathcal{T} \gets \{t \colon \text{ there exists } t' \in \LL \text{ with } t \preceq t' \text{ and } \occ{\LL}(t) \geq 2\}$
         
         \For{$t, t' \in \mathcal{T}$}{
            \If{$t \preceq t'$ and $\occ{\LL}(t) = \occ{\LL}(t')$}{
            $\mathcal{T} \gets \mathcal{T} \setminus \{t\}$;
            }
         }
         $\M \gets \emptyset$
         
         \For{ $t \in \mathcal{T}$}{

         $\M \gets \M \cup \{m_t \mapsto t\}$
         
         }

         \Return \M
         \caption{Seclection of Macro Definitions for Problem 3}\label{alg:maximalConstantLanguageCompression-problem-3}
     \end{algorithm}

Note again our restriction to languages over non-unary symbols. This is necessary to guarantee that a selected macro definition indeed guarantees a size reduction in combination with any other selected macro definitions. In particular, this was the necessary condition in Lemma~\ref{le:sizes-do-not-go-below-3} for the size of a macro expansion to be always $\geq 3$.

To illustrate this further, consider the following example.
\begin{example}
    Consider the languages 
    \[\LL = \{x(\textcolor{blue}{a(}\textcolor{red}{c(d)}\textcolor{blue}{)},
    \textcolor{blue}{a(}\textcolor{red}{c(d)}\textcolor{blue}{)},
    \textcolor{purple}{b(}\textcolor{red}{c(d)}\textcolor{purple}{)},
    \textcolor{purple}{b(}\textcolor{red}{c(d)}\textcolor{purple}{)})\}\]
    and 
    \[\LL' = \{x(
    y(\textcolor{blue}{a(}\textcolor{red}{c(d)}\textcolor{blue}{)}),
    y(\textcolor{blue}{a(}\textcolor{red}{c(d)}\textcolor{blue}{)}),
    y(\textcolor{purple}{b(}\textcolor{red}{c(d)}\textcolor{purple}{)}),
    y(\textcolor{purple}{b(}\textcolor{red}{c(d)}\textcolor{purple}{)}))\}\]
    where $x$ has arity $4$, $d$ is constant and all other symbols are unary.
    Then the only minimal encoding for $\LL$ is $\fencode{\LL}{M}$ with
    \[\LL_M = \{x(\textcolor{blue}{a(}m\textcolor{blue}{)},
    \textcolor{blue}{a(}m\textcolor{blue}{)},
    \textcolor{purple}{b(}m\textcolor{purple}{)},
    \textcolor{purple}{b(}m\textcolor{purple}{)})\},\]
    \[\MM = \{m \mapsto \textcolor{red}{c(d)}\}\]
    whereas the only minimal encoding for $\LL'$ is $\fencode{\LL'}{M'}$ with
    \[\LL'_{M'} = \{x(m,m,m',m')\}, \]
    \[\MM' = \{m \mapsto y(\textcolor{blue}{a(}\textcolor{red}{c(d)}\textcolor{blue}{)}), m' \mapsto y(\textcolor{purple}{b(}\textcolor{red}{c(d)}\textcolor{purple}{)})\}.\]
    Note that while the subterm $c(d)$ occurs equally often in both languages it is only set up as a macro for $\LL$ but not for $\LL'$.
    Furthermore, these two examples showcase, that greedily instantiating minimal, or respectively maximal, macros in the Hasse diagram that reduce the size of the encoding does not necessarily result in a minimal encoding. Instead, it is likely that we need a more complicated argument over size and number of occurrences. This is left for future work.
\end{example}
\subsection{Implementation Details}

The algorithms described in Section~\ref{sec:algos} are primarily designed to support arguments we make in our proofs. However, an \textit{exact} implementation of these algorithms (as specified above in this Appendix)
will not lead to the desired performance in terms of processing time that we report. In this section, we provide specifications of slightly adapted algorithms that we use in our implementation, which exhibits the favourable performance behaviour in practice that we report in our paper.

The idea is based on a simple argument that avoids unnecessary iterations over the set of macro definitions. In Algorithm~
\ref{alg:maximalConstantLanguageCompression}, the outermost \texttt{while} and \texttt{for} loops in lines 4 and 5 correspond to a breadth-first traversal through the Hasse diagram of macros' fixed-point expansions w.r.t.\ the relation $\preceq$. This traversal, in combination with the \texttt{for} loop starting in line 7 over terms in $\Lang$, checks for \textit{every} macro $m$ whether it can be instantiated in \textit{any} term $t \in L$. Essentially, this traversal iterates over \textit{all pairs} $(m,t)$ of available macro symbols $m$ and terms $t$. 

However, this is obviously not necessary because the only macros that can possibly be instantiated in a given $t \in \Lang$ are macro symbols with a fixed-point expansion to \textit{subterms} of $t$. So, for a given term, it suffices to check macro symbols with a fixed-point expansion to a subterm of $t$. Such macros can be instantiated, as before, based on their containment relationship. This can drastically improve the running time in practice because there are often \textit{far} fewer subterms in a term than there are macro definitions.

The pseudo-code for this adapted strategy of instantiating macros is shown in Algorithm~\ref{alg:maximalConstantLanguageCompression-implementation}:
\begin{itemize}
    \item Line 4 constructs all subterms for a given $t \in \Lang$,
    \item Line 5 iterates over these subterms in descending order according to the the $\preceq$ relation, i.e., macro containment,
    \item Lines 6--10 performs the replacement of terms, i.e, instantiates a macro (note that the existence of $m \mapsto t'$ in $\M^*$ can be implemented by a simple look-up table mapping fixed-point expansions to macro symbols),
    \item Line 11 updates the iterator for terms in $T_s$ by removing the term tree $t'$ that has just been tested for macro instantiation in $t$.
\end{itemize}

The same strategy can be applied to adapt
Algorithm~\ref{alg:maximalConstantLanguageCompression-problem-2} in an analogous manner. Also note that we use this adapted algorithm in our approach to construct size-minimal encodings w.r.t.\ Problem 3 (without unary symbols).

We test equality between terms using the OWL API's\footnote{https://owlcs.github.io/owlapi/apidocs\_5/} \texttt{equals} operator for \texttt{OWLObjects} but implement the rewriting of terms based on their abstract syntax trees represented with data structures offered by JGraphT.\footnote{https://jgrapht.org/}

Another detail worth mentioning in Algorithm~\ref{alg:maximalConstantLanguageCompression-implementation} is that each $t \in \Lang$ can be macrofied \textit{independently} from any other $t' \in \Lang$. This means that $\Lang$ can be partitioned and a macrofication can be computed using \textit{parallel computing} techniques, making our approach not only faster but also scalable. Note, however, that we do not report on performance statistics using parallel computing techniques in this paper. This will be done in future work in which we explore possible optimization techniques in more detail.

\begin{algorithm}[H]
         \SetAlgoLined
           \LinesNumbered
        \setcounter{AlgoLine}{0}
         \SetKwFunction{algo}{algo}\SetKwFunction{proc}{macroIntroduction} 
         \KwResult{Minimal Encoding $(\Lang_M,\mathcal{M})$ of input language $\Lang$ w.r.t.\ input definitions $\mathcal{M}$}

         $\mathcal{M}^* \gets \{ m \mapsto \fixedpoint{m} \colon m \mapsto t \in \M \}$
         
         $\Lang_M \gets \Lang$\;

             \For{$t \in \Lang_M$}{

                 $T_s \gets \{ t' \colon t' \preceq t \}$

                  \For{$t' \in T_s$ s.t. there is no $t'' \in T_s$ with $t' \prec t''$}{
                     \If{there exist $m \mapsto t'$ in $\M^*$}{
                         \For{all $p$ with $t|_p = t'$} {
                         $t \gets t[m]_p$
                     }
                     
                     }

                     $T_s \gets T_s \setminus \{t'\}$
                    
                  }            
             }
         
         \Return{$(\Lang_M, \mathcal{M})$}
         \caption{Adaptation of Algorithm 1}\label{alg:maximalConstantLanguageCompression-implementation}
     \end{algorithm}

\subsection{Further Details on Experimental Results}

Four of the five used ontologies in this work are publicly available and can be acquired from BioPortal as cited in the paper. SNOMED is the only exception. However, researchers in many countries can acquire SNOMED with an academic license free of charge. So, even though SNOMED is not publicly available for download, SNOMED and can be acquired as an OWL file for research purposes.

Since our experimental evaluation is based on the experimental design used by~\cite{DBLP:conf/aaai/NikitinaK17}, all results are based on class expression axioms. In other words, we load an ontology and extract class expression axioms using the constructors \opr{SubClassOf}, \opr{EquivalentClasses}, \opr{DisjointClasses}, and \opr{DisjointUnion}. Furthermore, we disregard annotations, i.e., two axioms are considered identical if they are indistinguishable w.r.t.\ their logical structure.

In Table~\ref{tab:minimizing-expressions}, we provide a direct comparison between the results reported by~\cite{DBLP:conf/aaai/NikitinaK17} and our results w.r.t.\ Problem 3. While there are 7 out of 15 cases in which the approach proposed by~\cite{DBLP:conf/aaai/NikitinaK17} obtains higher percentages in terms of the average size difference between class expresions in the original ontology and the macrofied one (Red.by $E_n$), our approach outperforms the one by~\cite{DBLP:conf/aaai/NikitinaK17} in all but two cases in terms of the percentage of expressions that could be reduced (Red. $E_n$).

There seems to be a discrepancy w.r.t.\ the average size of expressions in the ontology Genomic CDS (cf.\ ontology C in Table~\ref{tab:minimizing-expressions}). It appears that~\cite{DBLP:conf/aaai/NikitinaK17} report a much higher average size, namely 14.2, compared to us, 5.4. This discrepancy might be an artefact of the slightly different experimental setup of both studies. While we measure \textit{all} occurring class expressions in the ontology, \cite{DBLP:conf/aaai/NikitinaK17} only sampled 100 expressions. In this context, it is important to know that Genomic CDS contains a few very large axioms, e.g., a \opr{DisjointClasses} axiom involving 385 named classes. If such a large axioms is part of the 100 sampled expressions, then this can result in a much larger average compared to the average of all class expressions in the ontology.
\begin{table*}[t]
  \centering
  \small
  \begin{tabular}{| c | c | r | r || r | r || r | r || r | r |}
      \hline

     P &
   O &
  \# Expr. &
  $\overline{S}_{\ont}$ &
   Red. $E_2$ &
  Red.by $E_2$ &
  Red. $E_5$ &
    Red.by $E_5$ &
     Red. $E_{10}$ &
    Red.by $E_{10}$\\
\hline
\hline
        NK & S & - & 5.4 & 36\% & 54\%  & 50\%  & 28\%  & 33\%   & 32\%\\
        3 & S  & \numprint{550810} & 5.32 & 85\% & 45\% & 96\%  & 45\% & 100\%  & 44\%\\
\hline
        NK & G & -  & 3.1 & 27\% & 43\% & 86\%  & 47\% & 100\%   & 72\%\\
        3 & G  & \numprint{39626}  & 2.93 & 53\% & 48\% & 71\%  & 51\% & 88\%   & 42\%\\
\hline
        NK & C & -   & 14.2 & 14\% & 76\% & 32\%  & 90\% & 37\%   & 90\%\\
        3 & C  & \numprint{3746}   & 5.40 & 26\% & 33\% & 97\%  & 34\% & 99\%   & 32\%\\
\hline
        NK & F & -  & 2.9 & 16\% & 48\% & 76\%  & 48\% & 0\%   & 0\%\\
        3 & F  & \numprint{28922}  & 3.92  & 83\% & 25\% & 100\% & 23\% & 100\% & 8\%\\
\hline
        NK & N & - & 3.0 & 5\% & 26\%  & 33\%  & 26\%  & 40\%   & 14\%\\
        3 & N  & \numprint{217866} & 2.36 & 58\% & 48\% & 91\%  & 53\% & 99\%   & 49\%\\
\hline

  \end{tabular}
  \caption{Comparison with \cite{DBLP:conf/aaai/NikitinaK17}: Statistics on minimising class expressions showing the number of class expressions (\# Expr.), the average size of class expressions in an ontology ($\Bar{S}_\ont$), the percentage of class expressions of size $n$ that could made smaller via macro instantiation (Red. $E_n$), and the average size difference between a class expression in $\ont$ and its macrofied counterpart (Red.by $E_n$). The letters NK in the first column indicate results reported by~\cite{DBLP:conf/aaai/NikitinaK17} that we repeat here for convenience.
  }
  \label{tab:minimizing-expressions}
\end{table*}

\subsection{Comments on Related Work}
While terms and term trees are usually defined to be indistinguishable in the context of tree languages, the arguments of terms and their corresponding branches in trees are generally assumed to be either all ordered or unordered, cf.~\cite{comon2008tree}.
However, there is little work on tree languages that include both ordered \textit{and} unordered trees. Moreover, I am not aware of any work that considers the combination of ranked, unranked, ordered, unordered, and mixed symbols (cf.~Definition~\ref{def:alphabetTermsLanguage}), as required by more complex languages such as OWL. 

An early version of the theoretical framework presented in this paper can be found in Chapter 6 of my PhD thesis~\cite{kindermann2021analysing}.
Even though I already consider terms
to be \textit{indistinguishable} from their corresponding term trees,
I define term trees to be \textit{ordered}.
To handle more complex languages, such as OWL, I simply
define \textit{term tree equality} via \textit{term equality} in the desired language.
The idea of defining term equality via isomorphisms between \textit{labelled unordered trees} (cf.~Definition~\ref{def:subterm}) was only fully worked out later \cite{DBLP:conf/semweb/KindermannS22}.

While the definitions presented in Section~\ref{sec:preliminaries} are sufficient to handle OWL, I need to point out that they don't capture \textit{all} details of OWL's structural specificartion. For example, the terms $t_1 = \opr{ObjectIntersectionOf}(\conc{A}, \conc{A})$
and $t_2 = \opr{ObjectIntersectionOf}(\conc{A})$ would be \textit{valid} terms in the context of our framework, with $t_1 \neq t_2$. However, the arguments of the OWL constructor $\opr{ObjectIntersectionOf}$ are specified as a \textit{set}. So, in OWL, we'd have $t_1 = t_2 = \opr{ObjectIntersectionOf}(\conc{A})$. Furthermore, OWL's specification requires $\opr{ObjectIntersectionOf}$ to have \textit{at least two} (distinct) arguments. This means that  $t_1$ and $t_2$ aren't even valid OWL expressions. Such issues can be avoided by defining a notion of equality between \textit{term trees} via equality of \textit{terms} in the desired language, as I have done in my thesis.

Another important difference between the work presented in this paper and my PhD thesis is \textit{macro nesting}. In my thesis, I consider macro systems that \textit{do not allow} for macro nesting, i.e, the expansion of every macro symbol is required to be its 1-step expansion. I thought to have proven that Problem~\ref{prob:macroSystems}, as defined in this work, can be solved in polynomial time w.r.t.\ such macro systems that don't allow for nesting.

However, the algorithm I proposed in my thesis for solving Problem~\ref{prob:macroSystems} w.r.t.~1-step-expansion macros is \textit{not sufficient} for the general case (as I later found out by discovering a counter-example).
Rather, the algorithm can only be seen as heuristic using a greedy strategy to construct smaller and smaller language encodings in each iteration. In the same vein, the presented correctness proof only shows the correctness of this greedy strategy 
(the argument in Lemma 6.3.4 "Compression Improvement" on page 181 overlooks that the introduction of larger macros, using the same argument as formulated here in  Lemma~\ref{lemma:macroPrecedence}, can make the instantiation of \textit{smaller} 1-step-expansion macros possible again. However, such smaller macros will not be considered to be re-introduced in the algorithm).

So, it remains an open question whether Problem~\ref{prob:macroSystems} can be solved in polynomial time w.r.t.\ macro systems only allowing for 1-step-expansion macros (no nesting).

One more detail worth mentioning is the topic of acyclicity. In my thesis, I do not require macros to be acyclic. In this work, however, we restrict our attention to acyclic macros so that the expansion of macros is guaranteed to exist, to be unique, and to not contain any macro symbols (cf.~Lemma~\ref{lemma:cycles}).

Also, our notion of acyclicity is \textit{not} based on entailment. Moreover, we do not subject the \textit{input language} of the minimisation problem to some notion of acyclicity,
since macro definitions are not assumed to be part of the input language.
This makes our approach more generally applicable compared to existing approaches specifically designed for ontologies.
That said, we can of course choose to \textit{interpret} parts of the input language as macro definitions, in which case this interpretation needs to satisfy said acyclicity constraints for macro definitions.

\newpage

\section*{Acknowledgements}
Anne-Marie George was  supported under the NRC Grant No~302203
``Algorithms and Models for Socially Beneficial Artificial Intelligence''.

\bibliography{aaai24}

\end{document}